\newif\ifupdate\updatefalse
\newtheorem{theorem}{Theorem}
\begin{document}
%
% paper title
% Titles are generally capitalized except for words such as a, an, and, as,
% at, but, by, for, in, nor, of, on, or, the, to and up, which are usually
% not capitalized unless they are the first or last word of the title.
% Linebreaks \\ can be used within to get better formatting as desired.
% Do not put math or special symbols in the title.
% \title{Topology-Aware Correlations Between Relations for Inductive Link Prediction via Complete Enclosing Subgraphs}
\title{Learning Complete Topology-Aware Correlations Between Relations for Inductive Link Prediction}
%
%
% author names and IEEE memberships
% note positions of commas and nonbreaking spaces ( ~ ) LaTeX will not break
% a structure at a ~ so this keeps an author's name from being broken across
% two lines.
% use \thanks{} to gain access to the first footnote area
% a separate \thanks must be used for each paragraph as LaTeX2e's \thanks
% was not built to handle multiple paragraphs
%
%
%\IEEEcompsocitemizethanks is a special \thanks that produces the bulleted
% lists the Computer Society journals use for "first footnote" author
% affiliations. Use \IEEEcompsocthanksitem which works much like \item
% for each affiliation group. When not in compsoc mode,
% \IEEEcompsocitemizethanks becomes like \thanks and
% \IEEEcompsocthanksitem becomes a line break with idention. This
% facilitates dual compilation, although admittedly the differences in the
% desired content of \author between the different types of papers makes a
% one-size-fits-all approach a daunting prospect. For instance, compsoc 
% journal papers have the author affiliations above the "Manuscript
% received ..."  text while in non-compsoc journals this is reversed. Sigh.

\author{Jie~Wang,~\IEEEmembership{Senior Member, ~IEEE,} 
        Hanzhu~Chen,~Qitan~Lv,~Zhihao Shi,~Jiajun~Chen, \\
        Huarui~He,~Hongtao~Xie,~Defu~Lian,~\IEEEmembership{Member,~IEEE,}~Enhong~Chen,~\IEEEmembership{Fellow,~IEEE,} and~Feng~Wu,~\IEEEmembership{Fellow,~IEEE}
        % <-this % stops a space
\IEEEcompsocitemizethanks{
% note need leading \protect in front of \\ to get a newline within \thanks as
% \\ is fragile and will error, could use \hfil\break instead.
\IEEEcompsocthanksitem J. Wang, H. Chen, Q. Lv, J. Chen, H. He, H. Xie, D. Lian, E. Chen and F. Wu are with: a) CAS Key Laboratory of Technology in GIPAS, University of Science and Technology of China, Hefei 230027, China; b) Institute of Artificial Intelligence, Hefei Comprehensive National Science Center, Hefei 230091, China. 
E-mail: jiewangx@ustc.edu.cn, chenhz@mail.ustc.edu.cn, qitanlv@mail.ustc.edu.cn, zhihaoshi@mail.ustc.edu.cn,
jiajun98@mail.us
tc.edu.cn, huaruihe@mail.ustc.edu.cn, htxie@ustc.edu.cn, liandefu@ustc.
edu.cn, cheneh@ustc.
edu.cn, fengwu@ustc.edu.cn.
% \IEEEcompsocthanksitem S. Ji is with the Department of Computer Science and Engineering, Texas A\&M University, College Station, TX77843 USA. E-mail: sji@tamu.edu.
}% <-this % stops an unwanted space

\thanks{Manuscript received June, 2023.}}

% note the % following the last \IEEEmembership and also \thanks - 
% these prevent an unwanted space from occurring between the last author name
% and the end of the author line. i.e., if you had this:
% 
% \author{....lastname \thanks{...} \thanks{...} }
%                     ^------------^------------^----Do not want these spaces!
%
% a space would be appended to the last name and could cause every name on that
% line to be shifted left slightly. This is one of those "LaTeX things". For
% instance, "\textbf{A} \textbf{B}" will typeset as "A B" not "AB". To get
% "AB" then you have to do: "\textbf{A}\textbf{B}"
% \thanks is no different in this regard, so shield the last } of each \thanks
% that ends a line with a % and do not let a space in before the next \thanks.
% Spaces after \IEEEmembership other than the last one are OK (and needed) as
% you are supposed to have spaces between the names. For what it is worth,
% this is a minor point as most people would not even notice if the said evil
% space somehow managed to creep in.

% The paper headers
\markboth{IEEE TRANSACTIONS ON PATTERN ANALYSIS AND MACHINE INTELLIGENCE,VOL. XX, NO. X, JUNE~2023}%
{Shell \MakeLowercase{\textit{et al.}}: Bare Demo of IEEEtran.cls for Computer Society Journals}
% The only time the second header will appear is for the odd numbered pages
% after the title page when using the twoside option.
% 
% *** Note that you probably will NOT want to include the author's ***
% *** name in the headers of peer review papers.                   ***
% You can use \ifCLASSOPTIONpeerreview for conditional compilation here if
% you desire.

% The publisher's ID mark at the bottom of the page is less important with
% Computer Society journal papers as those publications place the marks
% outside of the main text columns and, therefore, unlike regular IEEE
% journals, the available text space is not reduced by their presence.
% If you want to put a publisher's ID mark on the page you can do it like
% this:
%\IEEEpubid{0000--0000/00\$00.00~\copyright~2015 IEEE}
% or like this to get the Computer Society new two part style.
%\IEEEpubid{\makebox[\columnwidth]{\hfill 0000--0000/00/\$00.00~\copyright~2015 IEEE}%
%\hspace{\columnsep}\makebox[\columnwidth]{Published by the IEEE Computer Society\hfill}}
% Remember, if you use this you must call \IEEEpubidadjcol in the second
% column for its text to clear the IEEEpubid mark (Computer Society jorunal
% papers don't need this extra clearance.)

% use for special paper notices
%\IEEEspecialpapernotice{(Invited Paper)}

% for Computer Society papers, we must declare the abstract and index terms
% PRIOR to the title within the \IEEEtitleabstractindextext IEEEtran
% command as these need to go into the title area created by \maketitle.
% As a general rule, do not put math, special symbols or citations
% in the abstract or keywords.

\IEEEtitleabstractindextext{%
\begin{abstract}

Inductive link prediction---where entities during training and inference stages can be different---has shown great potential for completing evolving knowledge graphs in an entity-independent manner. Many popular methods mainly focus on modeling graph-level features, while the edge-level interactions---especially the \textit{semantic correlations} between relations---have been less explored. However, we notice a desirable property of semantic correlations between relations is that they are inherently edge-level and entity-independent. This implies the great potential of the semantic correlations for the entity-independent inductive link prediction task. Inspired by this observation, we propose a novel subgraph-based method, namely TACO, to model \textbf{T}opology-\textbf{A}ware \textbf{CO}rrelations between relations that are highly correlated to their topological structures within subgraphs. Specifically, we prove that semantic correlations between any two relations can be categorized into seven \textit{topological patterns}, and then proposes Relational Correlation Network (RCN) to learn the importance of each pattern. To further exploit the potential of RCN, we propose Complete Common Neighbor induced subgraph that can effectively preserve complete topological patterns within the subgraph. Extensive experiments demonstrate that TACO effectively unifies the graph-level information and edge-level interactions to jointly perform reasoning, leading to a superior performance over existing state-of-the-art methods for the inductive link prediction task.

\end{abstract}

% Note that keywords are not normally used for peerreview papers.
\begin{IEEEkeywords}
Knowledge Graph, Inductive Link Prediction, Graph Neural Network, Subgraph Extraction

\end{IEEEkeywords}}

% make the title area
\maketitle 

% To allow for easy dual compilation without having to reenter the
% abstract/keywords data, the \IEEEtitleabstractindextext text will
% not be used in maketitle, but will appear (i.e., to be "transported")
% here as \IEEEdisplaynontitleabstractindextext when the compsoc 
% or transmag modes are not selected <OR> if conference mode is selected 
% - because all conference papers position the abstract like regular
% papers do.
\IEEEdisplaynontitleabstractindextext
% \IEEEdisplaynontitleabstractindextext has no effect when using
% compsoc or transmag under a non-conference mode.

% For peer review papers, you can put extra information on the cover
% page as needed:
% \ifCLASSOPTIONpeerreview
% \begin{center} \bfseries EDICS Category: 3-BBND \end{center}
% \fi
%
% For peerreview papers, this IEEEtran command inserts a page break and
% creates the second title. It will be ignored for other modes.
\IEEEpeerreviewmaketitle

\IEEEraisesectionheading{\section{Introduction}\label{sec:introduction}}
% Computer Society journal (but not conference!) papers do something unusual
% with the very first section heading (almost always called "Introduction").
% They place it ABOVE the main text! IEEEtran.cls does not automatically do
% this for you, but you can achieve this effect with the provided
% \IEEEraisesectionheading{} command. Note the need to keep any \label that
% is to refer to the section immediately after \section in the above as
% \IEEEraisesectionheading puts \section within a raised box.

% The very first letter is a 2 line initial drop letter followed
% by the rest of the first word in caps (small caps for compsoc).
% 
% form to use if the first word consists of a single letter:
% \IEEEPARstart{A}{demo} file is ....
% 
% form to use if you need the single drop letter followed by
% normal text (unknown if ever used by the IEEE):
% \IEEEPARstart{A}{}demo file is ....
% 
% Some journals put the first two words in caps:
% \IEEEPARstart{T}{his demo} file is ....
% 
% Here we have the typical use of a "T" for an initial drop letter
% and "HIS" in caps to complete the first word.

\IEEEPARstart{K} {nowledge} graphs organize human knowledge in the form of factual triples (head entity, relation, tail entity), and these graphs represent entities as nodes and relations as edges. Examples of knowledge graphs include WordNet \cite{wordnet}, Freebase \cite{freebase}, and DBPedia \cite{dbpedia}.
Recently, knowledge graphs have been widely used in natural language processing \cite{NLP-field}, question answering \cite{QA-field}, and recommendation systems \cite{RS-field}. 
However, real-world knowledge graphs confront the challenge of continuously emerging new entities, such as new users and products in e-commerce knowledge graphs or new molecules in biomedical knowledge graphs \cite{grail }. Moreover, knowledge graphs often suffer from incompleteness, i.e., some links are missing.

To address these challenges, extensive research efforts have been devoted to the inductive link prediction task \cite{ILP-Hai, OOKB, sketch, congrl}. Inductive link prediction aims to predict missing links between entities in knowledge graphs, where entities during the training and inference stages can be different.
Despite the importance of inductive link prediction in real-world applications, many existing knowledge graph completion methods focus on the transductive link prediction task \cite{transe, transr, complex, distmult}, which can only handle the entities seen during the training stage\cite{drum}.
Inductive link prediction is challenging because it requires models to predict missing relations between unseen entities during training. This means that models need to generalize what they have learned from training entities to unseen entities.

%Existing models of inductive reasoning mainly focus on predicting missing links by learning logical rules in knowledge graphs. Rule learning based methods (Yang, Yang, and Cohen 2017; Sadeghian et al. 2019) explicitly mine logical rules based on observed co-occurrence patterns of relations, which are inherently inductive as the learned rules are entity-independent and can naturally generalize to new entities.

Many existing inductive link prediction methods focus on predicting missing links by explicitly learning the connected  and closed logical rule, i.e., a reasoning path between the target nodes. Rule learning based methods \cite{neural-lp, drum, rlvrl, AIME} observe co-occurrence patterns of relations based on reasoning paths to explicitly mine logical rules. They are inherently inductive as the learned rules are entity-independent and can naturally generalize to new entities.
Recently, GraIL \cite{grail} models graph-level features by reasoning over subgraph structure surrounding the target link in an entity-independent manner. CoMPILE \cite{compile} simultaneously updates relation and entity embeddings to enhance the interaction between entities and relations during the message-passing procedure.
However, these methods mainly focus on modeling graph-level features while the edge-level \textit{semantic correlations} between relations have been less explored.
The absence of edge-level interactions may hinder the performance of edge-level modeling, posing a significant challenge to accurate inductive link prediction.

 	\begin{figure*}[htbp]
		\centering
		\includegraphics[width=2\columnwidth]{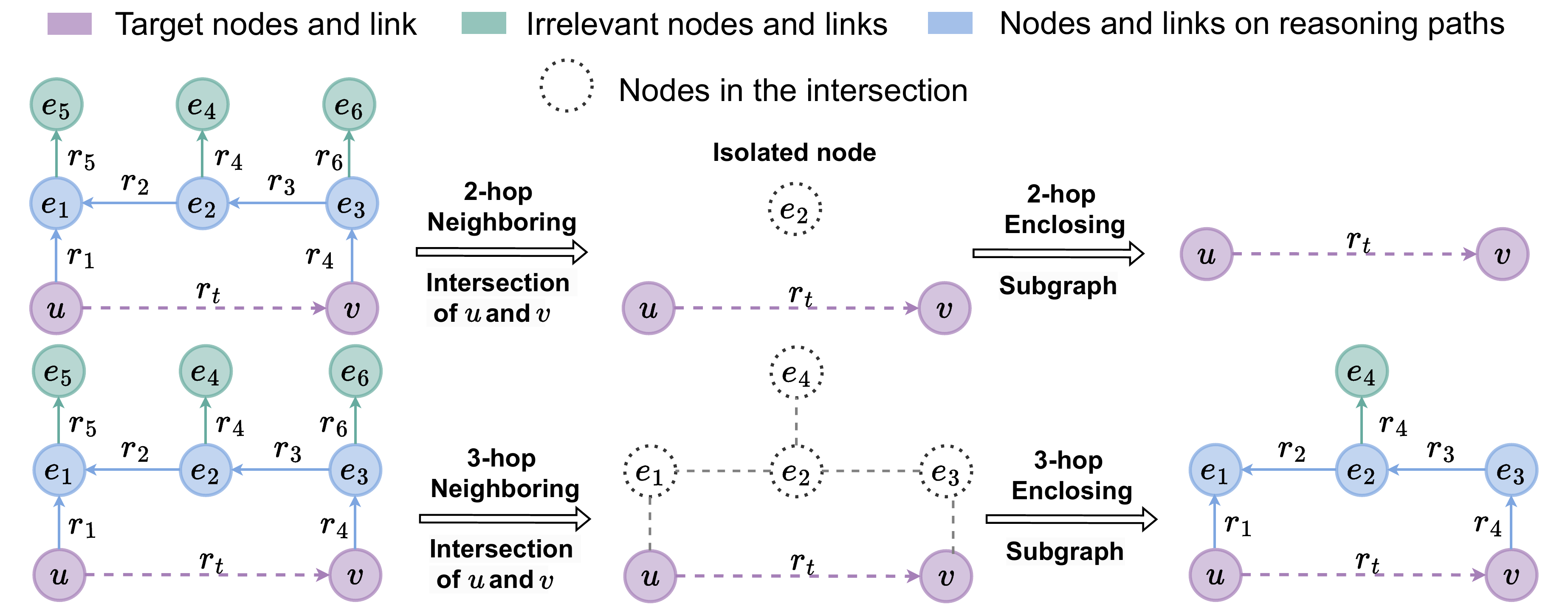}
		\caption{An example of enclosing subgraph extraction process. Enclosing subgraph contains links between nodes in the $n$-hop neighborhood intersection of target nodes.}
		\label{fig:intro_fig}
	\end{figure*}

In this paper, we exploit edge-level {semantic correlations} between relations that are commonly seen in knowledge graphs. 
For example, the relation pair in Freebase \cite{freebase}, ``/people/person/nationality" and ``/people/ethnicity/languages{\_}spoken" is strongly correlated as what kind of language spoken by a person is highly correlated to the nationality, while the correlation is weaker between the pair ``/people/person/nationality" and ``/film/film/country". Moreover, the topological structure between relations---the connected way for each relation pair---can
be different, which also has an impact on the correlation patterns. For example, considering the relation pair ``has\_gender"  and ``father\_of" in Figure \ref{fig:1}, they are connected by entity $e_1$ in a tail-to-tail manner and $e_2$ in a head-to-tail manner, which are different topological structures (see Section \ref{rcg} for rigorous definition).

	\begin{figure}[t]
		\centering 
		\includegraphics[width=1\columnwidth]{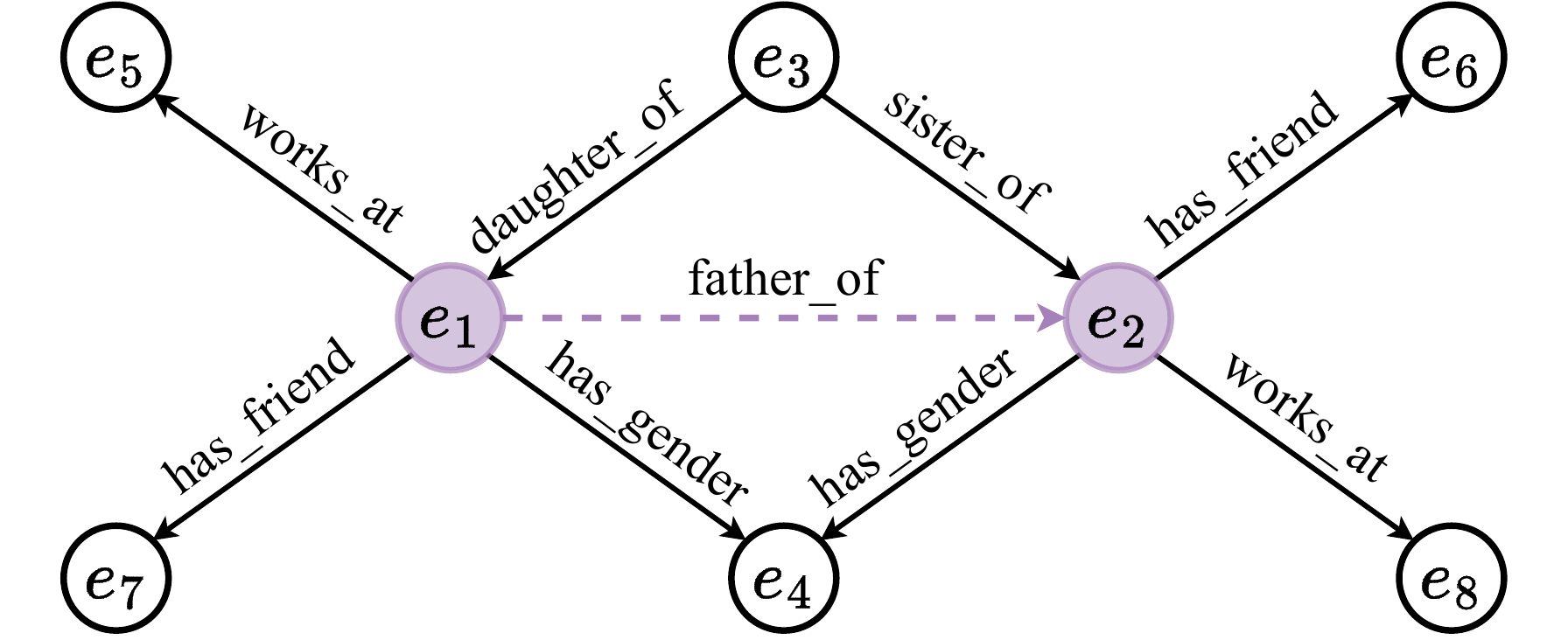}
		\caption{An example in knowledge graphs.}
		\label{fig:1} 
	\end{figure}

The aforementioned observation implies that semantic correlations between relations are highly correlated to their topological structures. Thus, to effectively leverage the \textbf{T}opology-\textbf{A}ware \textbf{C}orrela\textbf{T}ions between relations in knowledge graphs, we propose a novel subgraph-based inductive reasoning method, namely TACT. Specifically, TACT models semantic correlations between relations in two aspects: correlation patterns and correlation coefficients. We first categorize all relation pairs into \textit{seven} correlation patterns according to their topological structures (see Section \ref{rcg}) and convert the original knowledge graph into Relational Correlation Graph (RCG), where nodes represent the relations and edges indicate the correlation patterns between relations. 
Based on RCG, we further propose Relational Correlation Network (RCN) to learn the correlation coefficients of different correlation patterns.

An earlier version of TACT has been published at AAAI 2021 \cite{tact}. 
%Compared with other subgraph-based methods that mainly model graph structures, the RCN in TACT aims to accurately model semantic correlations between relations on subgraphs.
% From the perspective of rule mining, relations within reasoning paths are crucial as they help construct closed rules between target nodes \cite{drum}. However, enclosing subgraph \cite{grail}---which contains relations in the $n$-hop neighborhood intersection of the target entities---results in a significant loss of relevant relations on reasoning paths or introduces irrelevant relations. As shown in Figure \ref{fig:intro_fig}, the $2$-hop enclosing subgraph induced by entities $u$ and $v$ cannot preserve the reasoning path that contains the isolated node $e_2$. The $3$-hop enclosing subgraph can preserve the reasoning path, while it introduces irrelevant relations $r_4$ and nodes $e_4$ as well. Specifically, in WN18RR dataset \cite{wn18rr}, over 50\% of the 2-hop enclosing subgraphs extract only the target relations, and over 80\% of the 3-hop enclosing subgraphs contain irrelevant relations (see Section \ref{3.5.2}). Therefore, it is difficult for RCN to accurately model relation correlations around the target link, which makes TACT hard to make relation topology-aware inductive link prediction.
TACT predicts missing links on the widely used enclosing subgraph \cite{grail}, which contains relations in the $n$-hop neighborhood intersection of the target entities.
However, the enclosing subgraph hurts semantic correlations between relations, as it either drops relevant relations on reasoning paths or introduces irrelevant relations, as shown in Figure \ref{fig:intro_fig}. The $2$-hop enclosing subgraph induced by entities $u$ and $v$ cannot preserve the reasoning path that contains the isolated node $e_2$. The $3$-hop enclosing subgraph can preserve the reasoning path, while it introduces irrelevant relations $r_4$ and nodes $e_4$ as well.
From the perspective of rule mining, relations within reasoning paths are crucial to construct closed rules between target nodes \cite{drum}.
On the WN18RR dataset \cite{wn18rr}, over 45\% of the 2-hop enclosing subgraphs extract only the target relations, and over 75\% of the 3-hop enclosing subgraphs contain irrelevant relations (see Section \ref{3.5.2}). 
Therefore, the resulting RCG is inaccurate to extract relation correlations around the target link.

% TACT predicts missing links between target entities on the widely used enclosing subgraph\cite{grail}, i.e., the subgraph that contains relations between entities in the $n$-hop neighborhood intersection of the target entities. However, the enclosing subgraphs struggle to preserve reasoning paths and eliminate irrelevant relations and nodes, as shown in Figure \ref{fig:intro_fig}. The $2$-hop enclosing subgraph induced by entities $u$ and $v$ cannot preserve the reasoning path that contains their common 2-hop neighbor $e_2$. The $3$-hop enclosing subgraphs induced by entities $u$ and $v$ can preserve the reasoning path, while they introduce irrelevant relations $r_4$ and nodes $e_4$ as well (see Section \ref{3.5.2}). 
%We further conduct a statistical analysis in Section \ref{3.5.2} to reveal the prevalence of this phenomenon.
% Enclosing subgraphs result in a significant loss of relevant relations in reasoning paths or introduce irrelevant relations. Thus, it is difficult for TACT to model semantic correlations between relations on the enclosing subgraph. 

%Moreover, we observe that the commonly used enclosing subgraph\cite{grail} can result in a significant loss of relevant relation pairs. We then propose Complete Enclosing Subgraph, which effectively preserves the relevant relation pairs and eliminates the irrelevant relation pairs.

%TACO effectively leverages topological structures of relations, promoting the performance of inductive link prediction.

To tackle this problem, this journal manuscript significantly extends the conference version by an enhanced version of TACT, namely TACO. 
% In this journal version, we identify that the key issue is the isolated nodes, which are the common neighbors (CNs) \cite{common} of target nodes and their reasoning paths are difficult to preserve. Thus, we propose a novel Complete Common Neighbor Induced (CCN) subgraph extraction method to effectively preserve the complete reasoning paths. TACO then learns complete topology-aware correlations between relations within CCN subgraphs.
In this journal version, we propose a novel Complete Common Neighbor induced (CCN) subgraph extraction method to effectively preserve the complete reasoning paths. The key idea of CCN is to preserve all common neighbors \cite{common} between target nodes, especially the isolated nodes (e.g., node $e_2$ in Figure \ref{fig:intro_fig}) by introducing equivalent relations. Then, TACO learns complete topology-aware correlations between relations within CCN subgraphs.

Compared with the conference version, we have the following extensions.
First, we provide comprehensive statistical analysis to reveal the limitations of enclosing subgraphs in Section \ref{3.5.2}. 
Second, based on the analysis, we propose the CCN subgraph in Section \ref{da:pip} to effectively preserve the reasoning paths by adding equivalent relations from isolated nodes to target entities.
Third, to further leverage the complete topology-aware correlations between relations, we propose CCN+ subgraphs in Section \ref{ca:pip} to properly preserve the isolated nodes and complete relations on reasoning paths by traversing all relations in the reasoning path based on a recursive algorithm.
Finally, we conduct extensive ablation studies to demonstrate the effectiveness of different input embeddings and relation correlation patterns in Section \ref{ab: input emb} and Section \ref{ab: diff rc}, respectively. 
% In addition, while TACT proposes RCN as an extra perspective to model the topology structure between relations, TACO focuses on exploring the full potential of RCN.
Experiments demonstrate that 
TACO effectively unifies graph-level information and edge-level interactions to jointly perform reasoning, leading to a superior performance over TACT and existing state-of-the-art methods on inductive link prediction benchmarks.

% which propose two different subgraph-augmented approaches to mine more frequent and useful information and to alleviate unnecessary information loss. Experimental results show that TACO can effectively model semantic correlations between relations and significantly and consistently outperforms existing state-of-the-art methods on inductive link prediction benchmarks.
% You must have at least 2 lines in the paragraph with the drop letter
% (should never be an issue)

% needed in second column of first page if using \IEEEpubid
%\IEEEpubidadjcol

\section{Related Work}
\subsection{Rule Learning Based Methods}
Rule-based approaches are dedicated to mining connected and closed logical rules based on the observed highly frequent co-occurrence patterns of relations, which are inherently inductive as the learned rules are entity-independent. Mining rules from knowledge graphs is the central task of inductive logic programming \cite{ILP}. Traditional rule-based methods lack expressive power and suffer from the scalability to large knowledge graphs due to the rule-based nature \cite{grail}. Recently, Neural-LP \cite{neural-lp} proposes an end-to-end differentiable manner by using Tensorlog operators \cite{tensorlog} to learn the rule structure and parameters of logical rules simultaneously. Based on Neural-LP, DRUM \cite{drum} further proposes to mine more accurate rules in knowledge graphs from the perspective of low-rank matrix approximation. However, rule-based methods mainly focus on explicitly learning the first-order logical rules, which limits their ability to model more complex semantic correlations between relations and scalability to large knowledge graphs \cite{grail}.

 	\begin{figure*}[ht]
		\centering
		\includegraphics[width=2\columnwidth]{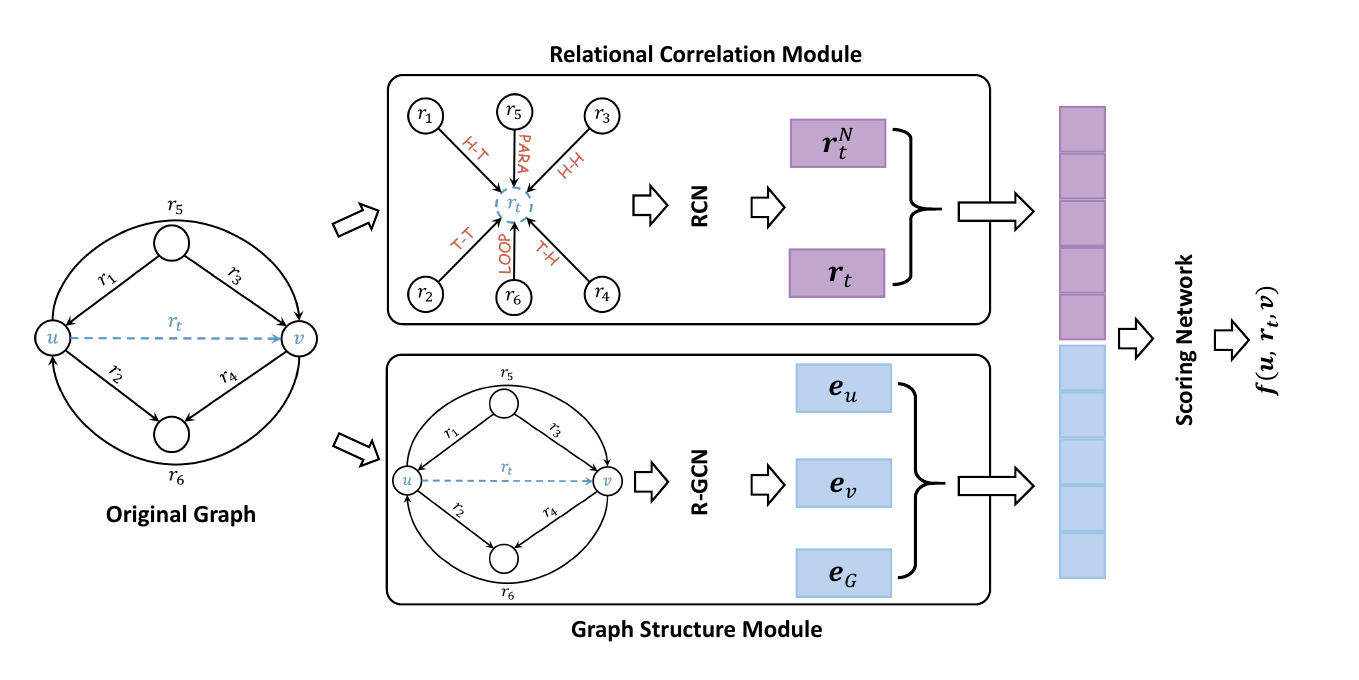}
		\caption{An overview of TACO. TACO consists of two modules: the relational correlation module and the graph structure module. We use a scoring network to score a triple based on the output of the two modules.}
		\label{fig:method-all}
	\end{figure*}

\subsection{Embedding Based Methods}
Embedding based methods primarily focus on learning the low-dimensional embeddings as representations to retrieve the relational information of the knowledge graph, which has been shown promising for knowledge graph completion \cite{rotate, DURA, HAKE, house}. Some embedding based methods can generate embeddings for unseen entities. LAN \cite{LAN} generates entity embeddings for unseen entities by aggregating known neighbor entity embeddings around the unseen entities with graph neural networks (GNNs)\cite{gnn-semi, xu2018powerful}. However, these methods require the unseen entities surrounded by the known entities, which restrains the ability to generalize to new knowledge graphs, where entities in two graphs have no overlap. 

Recently, GraIL \cite{grail} proposes a new link prediction framework based on GNN that reasons on subgraph structures, which can conduct link prediction in an entity-independent manner. Based on GraIL, CoMPILE \cite{compile} simultaneously updates relation and entity embeddings from both directed and undirected subgraph structures to enhance the interaction between entities and relations during the message passing procedure. SNRI\cite{SNRI} extracts enclosing subgraphs with neighboring infomax to learn the subgraph representation.
However, these methods do not take into account semantic correlations between relations, which are common in knowledge graphs. Meanwhile, the commonly
used enclosing subgraph cannot effectively preserve reasoning paths and eliminate irrelevant relations. As these methods reason on subgraphs, they can also be referred to as subgraph-based methods.
More recently, NBFNet \cite{nbfnet} views the connected rules from the target head entity to the target tail entity as paths and introduces a generalized neural Bellman-Ford network for link prediction.
Though NBFNet shares some similarities with subgraph-based methods, it is essentially different from them. NBFNet needs to reason over the whole graph for a test example, while subgraph-based methods only need to reason over a subgraph structure. Additionally, NBFNet benefits from a large number of negative sampling, while subgraph-based methods can reduce negative samples to one.

\subsection{Link Prediction with GNNs}
Recently, GNNs \cite{liu2021indigo, gat, graphformer, gin} have shown great potential in link prediction as knowledge graphs naturally have graph structures.  RGCN \cite{RGCN} proposes a relational graph neural network to take into account the connected relations when applying neighborhood aggregation on the entities. More recently, RGHAT \cite{RGHAT} proposes a relational graph neural network with hierarchical attention to effectively utilize the neighborhood information of entities in knowledge graphs. However, these methods have difficulty in generalizing to unseen entities as they rely on the learned entity embeddings during the training stage to predict the missing links.

\subsection{Modeling Correlations Between Relations}
Several existing knowledge graph embedding methods take into account the problem of modeling correlations between relations. TransF \cite{TransF} decomposes the relation–specific projection spaces into a small number of spanning bases, which are shared by all relations. TransCoRe \cite{COR-1} learns the embedded relation matrix by decomposing it as a product of two low-dimensional matrices. Different from the aforementioned works, our work

	\begin{enumerate}
		\item[(a)] classifies all relation pairs into seven \textit{topological patterns} and proposes a novel relational correlation network to model \textit{topology-aware} correlations.
        \item[(b)] proposes two novel Complete Common Neighbor induced subgraphs to alleviate the loss of relevant relations on reasoning paths.
	\item[(c)] considers the inductive link prediction task, while the mentioned knowledge graph embedding methods and GNN based methods have difficulty in handling this entity-independent task.
		\item[(d)] outperforms the existing state-of-the-art methods on benchmarks for the inductive link prediction task. 
	\end{enumerate}

\section{Methods}
In this section, we first introduce notations used in this paper in Section \ref{nota}. And then, we introduce our proposed method TACO. To perform inductive link prediction, TACO aims at scoring a given triple $(u, r_t, v)$ in an entity-independent manner, where $r_t$ is the target relation between the head entity $u$ and the tail entity $v$. Specifically, TACO consists of the relational correlation module and the graph structure module, which correspond to  Relational Correlation Network (RCN) in Section \ref{mcbr} and Relational Graph Correlation Network (R-GCN) in Section \ref{3.3}, respectively. RCN is proposed based on the observation that semantic correlations between relations are highly correlated to their topological structures, which are commonly seen in knowledge graphs. Moreover, we design R-GCN based on GraIL \cite{grail} to leverage the graph structure information. TACO organizes the two modules in a general framework to unify graph-level information and edge-level interactions. Figure \ref{fig:method-all} gives an overview of the proposed TACO. Finally,  we indicate the motivation of our Complete Common Neighbor induced(CCN) subgraph in Section \ref{3.5.1} and have a statistical analysis of the existing inductive datasets in Section \ref{3.5.2} to reveal the limitation of commonly used enclosing subgraph,. Further, we introduce 
the CCN subgraph methods based on the statistical analysis in Sections \ref{da:pip} and \ref{ca:pip}.

\subsection{Notations}\label{nota}
	Given a set $\mathcal{E}$ of entities and a set $\mathcal{R}$ of relations, a knowledge graph $\mathcal{G}=\{(u,r, v)|u,v\in\mathcal{E},\,r\in\mathcal{R}\}$ is a collection of factual triples, where $u$, $v$, and $r$ represent the head entities, the tail entities, and the relations between the head and tail entities, respectively. We use $\textbf{e}_u, \textbf{r}$ and $\textbf{e}_v$ to denote the embedding of the head entity, the relation and the tail entity. Let $d$ denote the embedding dimension of each entity and relation. We denote the $i^{th}$ entry of a vector $\textbf{e}$ as $[\textbf{e}]_i$. Let $\circ$ be the Hadamard product between two embedding vectors,
	\begin{align*}
		[\textbf{a} \circ \textbf{b}]_i = [\textbf{a}]_i \cdot [\textbf{b}]_i ,
	\end{align*}
	and we use $\oplus$ to denote the concatenation of vectors.
 
 \subsection{Modeling Correlations Between Relations}\label{mcbr}
 To model semantic correlations between relations, we consider the correlations in two aspects: 
 	\begin{enumerate}
		\item[(a)] Correlation patterns: The correlation patterns between any two relations are highly correlated to their topological structures in knowledge graphs. 
		\item[(b)] Correlation coefficients: The correlation coefficients can represent the degree of semantic correlations between any two relations. 
	\end{enumerate}
 \subsubsection{Relational Correlation Graph}\label{rcg}
 %\noindent\textbf{Relational Correlation Graph
 % \hspace{1.5mm} 
 To model the correlation patterns between relations, we classify all relation pairs into seven different topological patterns. As illustrated in Figure \ref{fig:2}, the topological patterns are ``head-to-tail", ``tail-to-tail", ``head-to-head", ``tail-to-head", ``parallel", ``loop", and ``not connected". We define the corresponding correlation patterns as ``H-T", ``T-T", ``H-H", ``T-H", ``PARA", ``LOOP", and ``NC", respectively. For example, we denote $(r_1, \text{H-T}, r_2)$ as the correlation between relation $r_1$ and $r_2$ is the ``H-T" pattern for $r_2$, which indicates that $r_1$ and $r_2$ are connected in a head-to-tail manner. $(r_1, \text{PARA}, r_2)$ indicates that the two relations are connected by the same head entity and tail entity, and $(r_1, \text{LOOP}, r_2)$ indicates that the two relations form a loop structure in the original local graph. We prove that the number of topological patterns between any two relations is at most \textit{seven} in the Appendix \ref{appenA}.

 	Based on the definition of different correlation patterns, we can convert the original knowledge graph to Relational Correlation Graph (RCG), where the nodes represent the relations and the edges indicate the correlation patterns between any two relations in the original knowledge graph. Figure \ref{fig:2} shows the topological patterns between any two relations and the corresponding RCGs. Notice that for the topological pattern that two relations are not connected, its corresponding RCG consists of two isolated nodes.

	\begin{figure}[ht]
		\centering
		\includegraphics[width=1\columnwidth]{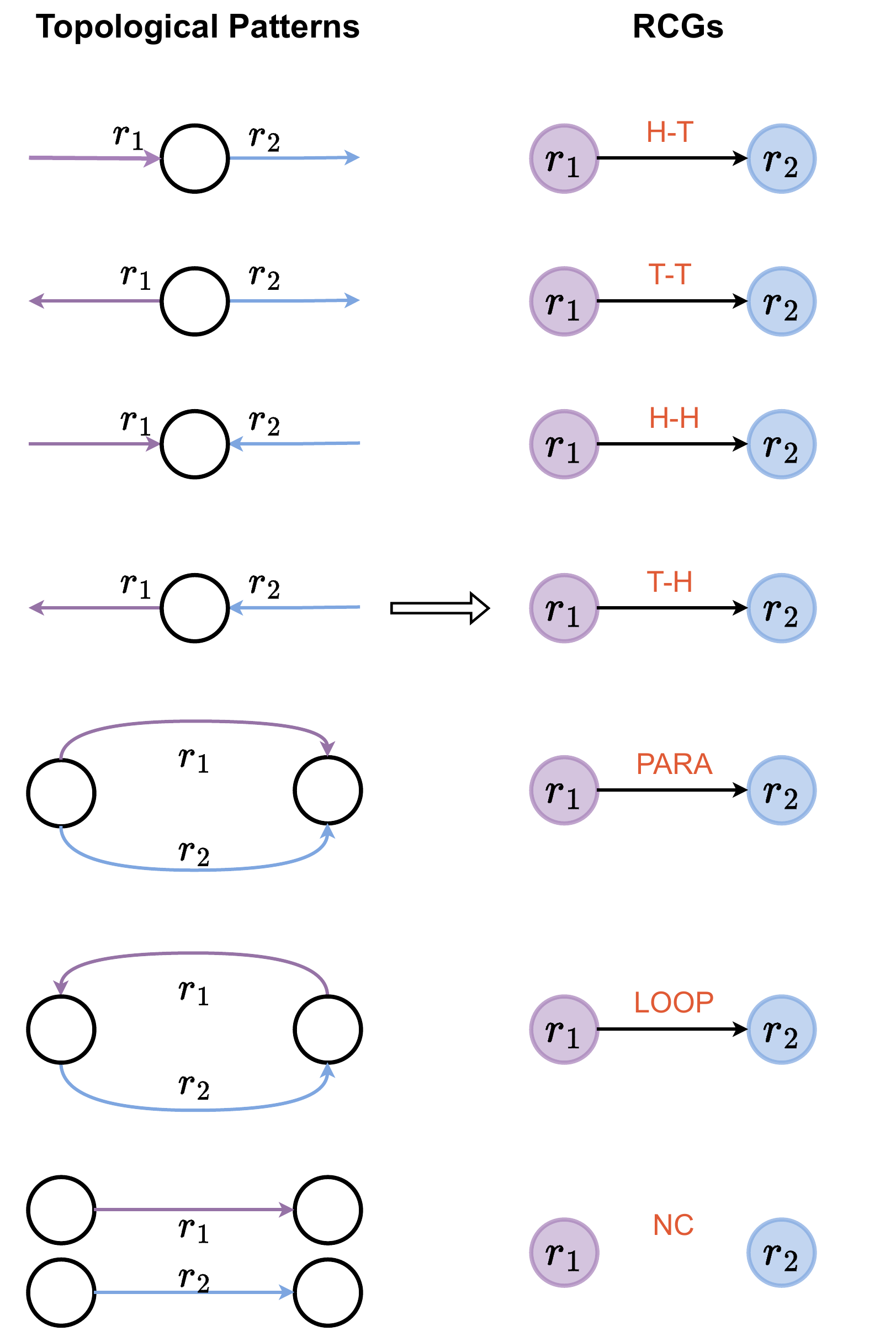}
		\caption{An illustration of the topological patterns between any two relations and the corresponding RCGs. For the topological pattern where two relations are not connected, its corresponding RCG consists of two isolated nodes.}
		\label{fig:2}
	\end{figure}
\subsubsection{Relational Correlation Network}\label{rcn}
Based on RCG, we propose Relational Correlation Network (RCN) to model the importance of different correlation patterns for inductive link prediction. The RCN module consists of two parts: the correlation pattern part and the correlation coefficient part. The correlation pattern part takes into account the influence of different topological structures between relations and the correlation coefficient part aims at learning the degree of different correlations between relations.

For an edge around the target relation $r_t$, we can divide all its adjacent edges in RCG into six connected categories by the topological patterns ``H-T", ``T-T", ``H-H", ``T-H", ``PARA", and ``LOOP", respectively. Notice that the topological pattern ``NC" is not considered as it means the edges (relations) are not connected in the original knowledge graph. For the six connected categories, we use six linear transformations to learn the different semantic correlations corresponding to the topological patterns. To better differentiate the degree of different correlations for the target relation $r_t$, we further use attention networks to learn corresponding correlation coefficients for all correlation categories.

        \begin{figure*}[ht]
    \centering % <-- added
\begin{subfigure}{2\columnwidth}
\centering
  \includegraphics[width=400pt]{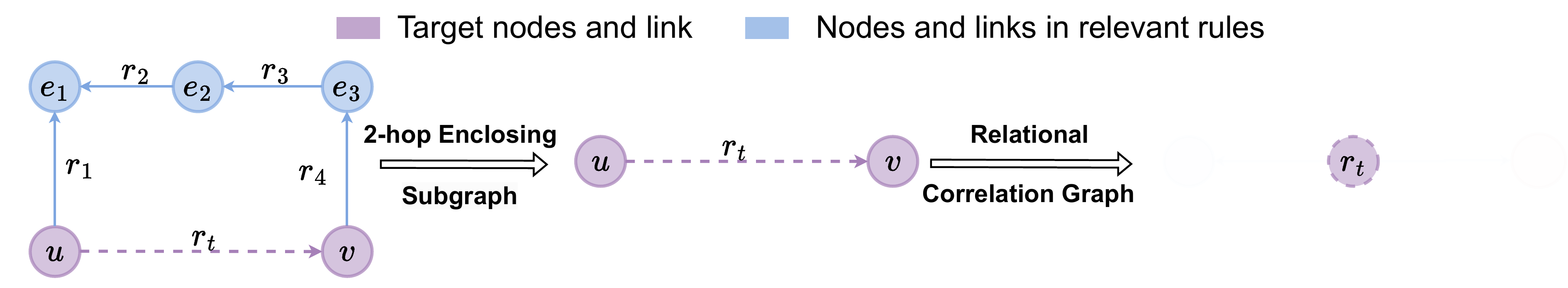}
  \label{fig:why-en}
%   \vskip 0.225in
  \caption{The $2$-hop enclosing subgraph and corresponding relational correlation graph via adapting the enclosing subgraph {method}.}\label{fig:en}
\end{subfigure}
\medskip
\begin{subfigure}{2\columnwidth}
\centering
  \includegraphics[width=400pt]{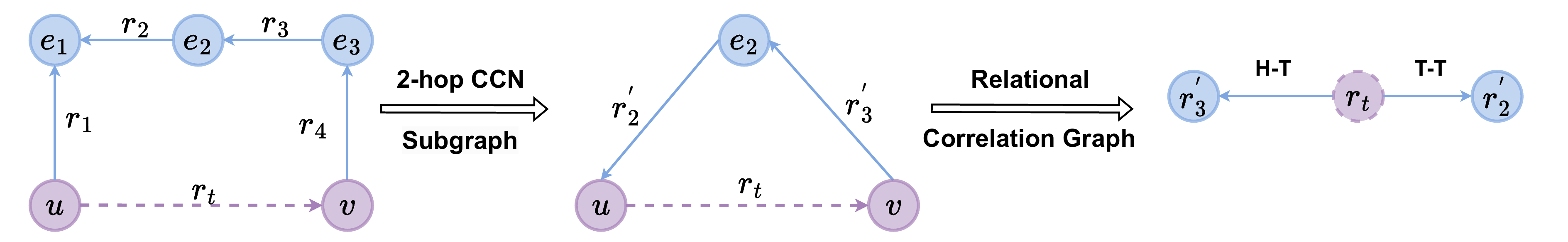}
  \label{fig:why-sa}
%   \vskip -0.2in
  \caption{The $2$-hop CCN subgraph and corresponding relational correlation graph via adapting the CCN subgraph {method}.}\label{fig:saen}
\end{subfigure}\hfil % <-- added
\medskip
\begin{subfigure}{2\columnwidth}
\centering
  \includegraphics[width=400pt]{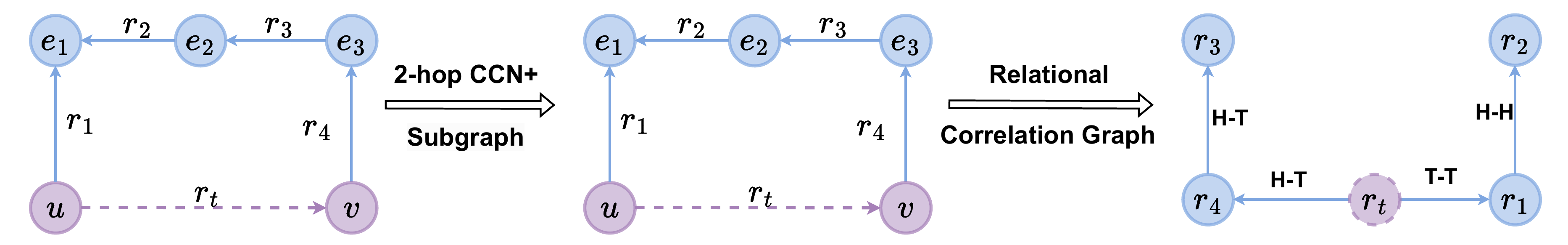}
  \label{fig:why-03}
%   \vskip -0.2in
  \caption{The $2$-hop CCN+ subgraph and corresponding relational correlation graph via adapting the CCN+ subgraph {method}.} \label{fig:03}
\end{subfigure}\hfil % <-- added
\vskip -0.15in
\caption{Comparison between the $2$-hop enclosing subgraph {method}, $2$-hop CCN {method}, and $2$-hop CCN+ method in the same original knowledge graph.}
\label{fig:illustration}
\vskip -0.1in
\end{figure*}

	Specifically, we aggregate all the correlation coefficients of different correlation patterns for the relation $r_t$ to get the neighborhood embedding in a local extracted subgraph, which is denoted by $\textbf{r}_t^{N}$. 
	\begin{align}\label{rel-agg}
		\textbf{r}_t^{N} = \frac{1}{6}\sum_{p=1}^{6} (\textbf{N}^p_t \circ \mathbf{\Lambda}^p_t) \textbf{R} \textbf{W}^p 
	\end{align}
	where $\textbf{W}^p \in \mathbb{R}^{d\times d}$ is the weight parameter matrix, $\textbf{R} \in \mathbb{R}^{|\mathcal{R}|\times d}$ denotes the embedding of all relations. Suppose the embedding of $r_i$ is $\textbf{r}_i \in \mathbb{R}^{1\times d}$, then $\textbf{R}_{[i,:]} = \textbf{r}_i$ where $\textbf{R}_{[i,:]}$ denotes the $i^{th}$ slice along the first dimension. $\textbf{N}^p_t \in \mathbb{R}^{1\times |\mathcal{R}|}$ is the indicator vector where the entry $[\textbf{N}^p_t]_i=1$ if $r_i$ and $r_t$ are connected in the $p^{th}$ topological pattern, otherwise $[\textbf{N}^p_t]_i=0$. $\mathbf{\Lambda}^p_t \in \mathbb{R}^{1\times|\mathcal{R}|}$ is the weight parameter, which indicates the degree of different correlations for the relation $r_t$ in the $p^{th}$ correlation pattern. Note that, we restrict $[\mathbf{\Lambda}_t^p]_i \ge 0$ and $\sum_{i=1}^{|\mathcal{R}|} [\mathbf{\Lambda}_t^p]_i = 1$. 

	Furthermore, we concatenate $\textbf{r}_t$ and $\textbf{r}_t^{N}$ to get the final embedding $\textbf{r}^F_t$.
	\begin{align}\label{rel-agg-2}
		\textbf{r}^F_t =\sigma ([\textbf{r}_t \oplus \textbf{r}_t^{N}] \textbf{H})
	\end{align}
	where $\textbf{H}\in \mathbb{R}^{2d\times d}$ is the weight parameters, and $\sigma$ is an activation function, such as $\text{ReLU}(\cdot)=\max(0, \cdot)$. We call the module that models semantic correlations between relations as the \textit{relational correlation module}, and $\textbf{r}^F_t$ is the final output of the module.

 	\subsection{Modeling graph structures}\label{3.3}
  	For a target prediction triple $(u, r_t, v)$, the local extracted graph around it contains the information about how the triple connected with its neighborhoods. To take advantage of the local graph structural information, we use a graph structural network to embed local graphs into vectors based on GraIL \cite{grail}. To model {the graph structure around the triple $(u, r_t, v)$,} we perform the following steps: (1) subgraph extraction; (2) node labeling; (3) graph embedding. Notice that nodes here in step 2 represent entities in the original knowledge graphs.

   \subsubsection{Subgraph Extraction}\label{sec:subex}

    For a triple $(u, r_t, v)$, we first extract the enclosing subgraph surrounding the target nodes $u$ and $v$ \cite{grail}. The following steps give the enclosing subgraph between nodes $u$ and $v$. First, we compute the neighbors $\mathcal{N}_k(u)$ and $\mathcal{N}_k(v)$ of the two nodes $u$ and $v$, respectively, where $k$ denotes the max distance of neighbors around node $u$ and $v$. Second, we take an intersection of $\mathcal{N}_k(u)$ and $\mathcal{N}_k(v)$ to get $\mathcal{N}_k(u) \cap \mathcal{N}_k(v)$. Third, we compute the enclosing subgraph $\mathcal{G}(u, r_t, v)$ by pruning nodes of $\mathcal{N}_k(u) \cap \mathcal{N}_k(v)$ that are isolated, i.e., the nodes have no edges with the other nodes in  $\mathcal{N}_k(u) \cap \mathcal{N}_k(v)$.

 %For a triple $(u, r_t, v)$, we first compute the neighbors $\mathcal{N}_k(u)$ and $\mathcal{N}_k(v)$ of the two nodes $u$ and $v$, respectively, where $k$ denotes the max distance of neighbors around node $u$ and $v$. Second, we take an intersection of $\mathcal{N}_k(u)$ and $\mathcal{N}_k(v)$ to get $k$ -hop Common neighbors $\mathcal{CN}_k(u, v)$ = $\mathcal{N}_k(u) \cap \mathcal{N}_k(v)$. Third, we compute the isolated nodes $\mathcal{IN}_k(u, v)$ within $\mathcal{CN}_k(u, v)$, which have no edges with the other nodes in $\mathcal{CN}_k(u, v) \cup \{u, v\}$. For the isolated nodes $\mathcal{IN}_k(u, v)$, We propose two methods (see Section \ref{da:pip} and Section \ref{ca:pip}) to contain complete common neighbors $\mathcal{CN}_k(u, v)$ by connecting isolated nodes to non-isolated nodes, while the enclosing subgraphs prune the $\mathcal{IN}_k(u, v)$ within $\mathcal{CN}_k(u, v)$. 
 %namely the CCN subgraph method in Section \ref{da:pip} and the CCN+ subgraph method in Section \ref{ca:pip}.
 
 % by pruning nodes of $\mathcal{N}_k(u) \cap \mathcal{N}_k(v)$ that are isolated or at a distance larger than $k$ from either node $u$ or node $v$.

     \subsubsection{Node Labeling}

	Afterward, we label the surrounded nodes in the extracted enclosing subgraph following the approach proposed by GraIL \cite{grail}. We first label each node $i$ in the subgraph around nodes $u$ and node $v$ with the tuple $(d(i,u), d(i,v))$, where $d(i,u)$ denotes the shortest distance between nodes $i$ and $u$ without counting any path through $v$ (likewise for $d(i,v)$). This captures the topological position of each node with respect to the target nodes in the extracted enclosing subgraph. The two target nodes $u$ and $v$ are uniquely labeled $(0,1)$ and $(1,0)$. Then, the node features are correspondingly defined as $[\text{one-hot}(d(i,u)) \oplus \text{one-hot}(d(i,v))]$, where $\text{one-hot}(p) \in \mathbb{R}^{1\times d}$ represents the one-hot vector that only the $p^{th}$ entry is $1$, where $d$ represents the dimension of the node embeddings. 

    \subsubsection{Graph Embedding}
	After node labeling of the extracted enclosing subgraph, the nodes in the subgraph have the initial embeddings. We can then use the R-GCN module \cite{RGCN} to learn the node embeddings on the extracted enclosing subgraph $\mathcal{G}(u, r_t, v)$, which can be denoted as 
	\begin{align*}
		\textbf{e}_i^{(k+1)} = \sigma\left( \sum_{r\in \mathcal{R}} \sum_{j\in \mathcal{N}_i^r} \frac{1}{c_{i,r}} \textbf{e}_j^{(k)}  \textbf{W}_r^{(k)} + \textbf{e}_i^{(k)}\textbf{W}_0^{(k)} \right)
	\end{align*}
 	where $\textbf{e}_i^{(k)}$ denotes the embedding of entity $e_i$ of the $k^{th}$ layer in the R-GCN. $\mathcal{N}_i^r$ denotes the set of neighborhood indices of node $i$ under relation $r\in \mathcal{R}$. $c_{i,r} = |\mathcal{N}_i^r|$ is a normalization constant. $\textbf{W}_r^{(k)}\in\mathbb{R}^{d\times d} (r\in\mathcal{R})$ and $ \textbf{W}_0^{(k)} \in \mathbb{R}^{d\times d}$ are the weight parameters. $\sigma(\cdot)$ is a activation function, such as the $\text{ReLU}(\cdot) = \max(0, \cdot)$.

  	Suppose that the number of layers in the R-GCN module is $L$, we calculate the embedding of the whole enclosing subgraph $\mathcal{G}(u,r_t,v)$ as
	\begin{align*}
		\textbf{e}_{\mathcal{G}(u,r_t,v)}^{(L)} = \frac{1}{|\mathcal{V}_{\mathcal{G}(u,r_t,v)}|} \sum_{i\in \mathcal{V}_{\mathcal{G}(u,r_t,v)}} \textbf{e}_i^{(L)},
	\end{align*}
	where $\mathcal{V}_{\mathcal{G}(u,r_t,v)}$ denotes the set of nodes in graph $\mathcal{G}(u,r_t,v)$. We further combine the target nodes and the subgraph embedding, the structural information is represented by the vector $\textbf{e}_S \in \mathbb{R}^{1\times 3d}$,
	\begin{align*}
		\textbf{e}_S = \textbf{e}_{\mathcal{G}(u,r_t,v)}^{(L)} \oplus \textbf{e}_u^{(L)} \oplus \textbf{e}_v^{(L)}
	\end{align*}
	We call the module that can model the graph structures as the \textit{graph structure module}, and the embedding vector $\textbf{e}_S$ is the final output of the module.

        \subsection{The Framework of TACO}
        \subsubsection{Scoring Network}
	The relational correlation module and graph structure module output the embedding vectors $\textbf{r}_t^F$ and $\textbf{e}_S$, respectively. To organize the two modules in a unified framework, we design a scoring network to combine the outputs of the two modules and get the score for a given triple $(u, r_t, v)$. The score function $f(u, r_t, v)$ is defined as
 	\begin{align*}
		f(u,r_t,v) = [\textbf{r}_t^{F} \oplus \textbf{e}_S] \textbf{W}_S
	\end{align*}
	where $\textbf{W}_S \in \mathbb{R}^{4d\times 1}$ is the weight parameters.
        \subsubsection{Loss Function}
	We conduct negative sampling and train the model to score positive triples higher than the negative triples using a {noise-contrastive} hinge loss following TransE \cite{transe}. The loss function $\mathcal{L}$ is 
	\begin{align*}
		\mathcal{L} = \sum_{i\in [n], (u,r_t,v)\in \mathcal{G}} \max(0, f(u'_i,r'_{t,i},v'_{i}) - f(u,r_t,v) + \gamma)
	\end{align*}
	\noindent where $\gamma$ is the margin hyperparameter and $\mathcal{G}$ denotes the set of all triples in the knowledge graph. $(u'_i,r'_{t,i},v'_{i})$ denotes the $i^{th}$ negative triple of the ground-truth triple $(u, r_t, v)$ and $[n]$ represents the set $\{1,2,\cdots, n\}$, where $n$ is the number of negative samples for each triple.
        \subsection{Complete Common Neighbor induced Subgraph }\label{sec:saen}

% hz:semi或者Enc ratio 是否可以去掉

\begin{table*}

    \caption{Statistics on inductive datasets when setting the neighbor hop $h$ to $2$. The values on the right of Num = 2, Num = 3, and Others denote the proportion of the corresponding type of subgraphs to the total number of extracted $2$-hop enclosing subgraphs of each dataset. Num = 2 denotes that the extracted enclosing subgraph only consists of the head entity $u$ and the tail entity $v$ with the target relation between them. Num = 3 denotes that apart from the head entity $u$ and the tail entity $v$, the extracted enclosing subgraph only remains one other entity consisting of the path from $u$ to $v$. The Incomplete$\_$Ratio is the proportion of the total nodes in the enclosing subgraph to the number of nodes in the CCN+ subgraph. The smaller the Incomplete\_Ratio value is, the more relevant rule loss is caused by the $2$-hop enclosing subgraph extraction method. }\label{tab:sta}
	\centering
		\resizebox{2.0\columnwidth}{!}{
			\begin{tabular}{c c c c c c c  c c c c  c c c c } 
				\toprule
				& &\multicolumn{4}{c}{\textbf{WN18RR}}&  \multicolumn{4}{c}{\textbf{FB15k-237}} & \multicolumn{4}{c}{\textbf{NELL-995}}\\
				\cmidrule(lr){3-6} \cmidrule(lr){7-10} \cmidrule(lr){11-14}
				& & v1    & v2    & v3    &  v4   & v1    & v2    & v3    &  v4   & v1    & v2    & v3    &  v4 \\
				\midrule
                \multirow{5}{*}{Statistics in Training Set} &
				Num=2       & 0.505 & 0.453 & 0.463 & 0.471 &  0.134 & 0.125 & 0.092 &  0.056 & 0.318 & 0.168 & 0.074 & 0.105 \\
				& Num=3            &  0.073 & 0.082 & 0.086 & 0.087 & 0.024 &  0.018 & 0.014 & 0.003 & 0.001 & 0.001 & 0.001 & 0.002 \\
				& Others           & 0.422 & 0.465 & 0.450 &  0.465 & 0.847 & 0.859 &  0.859 & 0.941 &  0.682 & 0.832 &  0.894 & 0.894 \\
    % &Enc\_ratio_{sc} &  0.414 &  0.436 & 0.422 &  0.427 & 0.566 &  0.640 &   0.702 & 0.767 & 0.416 & 0.585 &  0.653 & 0.651 \\ 
    &Incomplete\_Ratio & 0.349 & 0.351 & 0.342 &  0.353 & 0.531 & 0.634 & 0.698 & 0.754 & 0.362 & 0.566 & 0.639 & 0.648 \\
                %\cmidrule(lr) {2-14}
                \midrule
                \multirow{5}{*}{Statistics in Tesing Set}
				& Num=2           & 0.528 &  0.505 & 0.504 &  0.510 &  0.090 &  0.049 &  0.092 &  0.062 & 0.525 &  0.221 & 0.174 & 0.174 \\ 
               & Num=3 & 0.034 & 0.029 & 0.039 & 0.042 & 0.004 & 0.002 & 0.014 & 0.004 & 0.014 & 0.003 & 0.006 & 0.001\\
               & Others & 0.456 & 0.481 & 0.477 &  0.443 &  0.906 & 0.941 & 0.897 & 0.930 & 0.469 &  0.776 & 0.822 &  0.827\\
               % &Enc\_ratio_{sc} & 0.585 & 0.591 & 0.611 &  0.593 & 0.851 & 0.886 & 0.918 & 0.938 & 0.694 & 0.787 & 0.866 & 0.851\\
               & Incomplete\_Ratio & 0.564 & 0.579 & 0.594 &  0.576 & 0.811 & 0.862 & 0.897 & 0.873 & 0.685 & 0.763 & 0.843 & 0.837 \\
				\bottomrule
			\end{tabular}
   
		}
	\end{table*}

	\begin{table*}[ht]
		\caption{Statistics on inductive datasets when setting the neighbor hop $h$ to $3$. The values below each version of inductive datasets denote the proportion of the extracted $3$-hop enclosing subgraphs that contain irrelevant rules to the total number of extracted $3$-hop enclosing subgraphs.}\label{Tab: 3hop sta}
		\centering
		\resizebox{2.0\columnwidth}{!}{
			\begin{tabular}{c c c c c c  c c c c  c c c c } 
				\toprule
			 	&\multicolumn{4}{c}{\textbf{WN18RR}}&  \multicolumn{4}{c}{\textbf{FB15k-237}} & \multicolumn{4}{c}{\textbf{NELL-995}}\\
				\cmidrule(lr){2-5} \cmidrule(lr){6-9} \cmidrule(lr){10-13}
				& v1    & v2    & v3    &  v4   & v1    & v2    & v3    &  v4   & v1    & v2    & v3    &  v4 \\
    \midrule
                Statistics in Training Set &0.872 & 0.756 &0.759 &0.850 & 0.979 & 0.988 & 0.989 & 0.992 & 0.969 & 0.998 & 0.999 & 0.998\\
                Statistics in Testing Set  & 0.580 & 0.501 & 0.575& 0.544& 0.615 &0.787 & 0.798 & 0.830 &0.989 & 0.918 & 0.883& 0.917\\
				\bottomrule
			\end{tabular}
		}
  
	\end{table*}

        \begin{figure*}[ht]
    \centering % <-- added
\begin{subfigure}{2\columnwidth}
\centering
  \includegraphics[width=400pt]{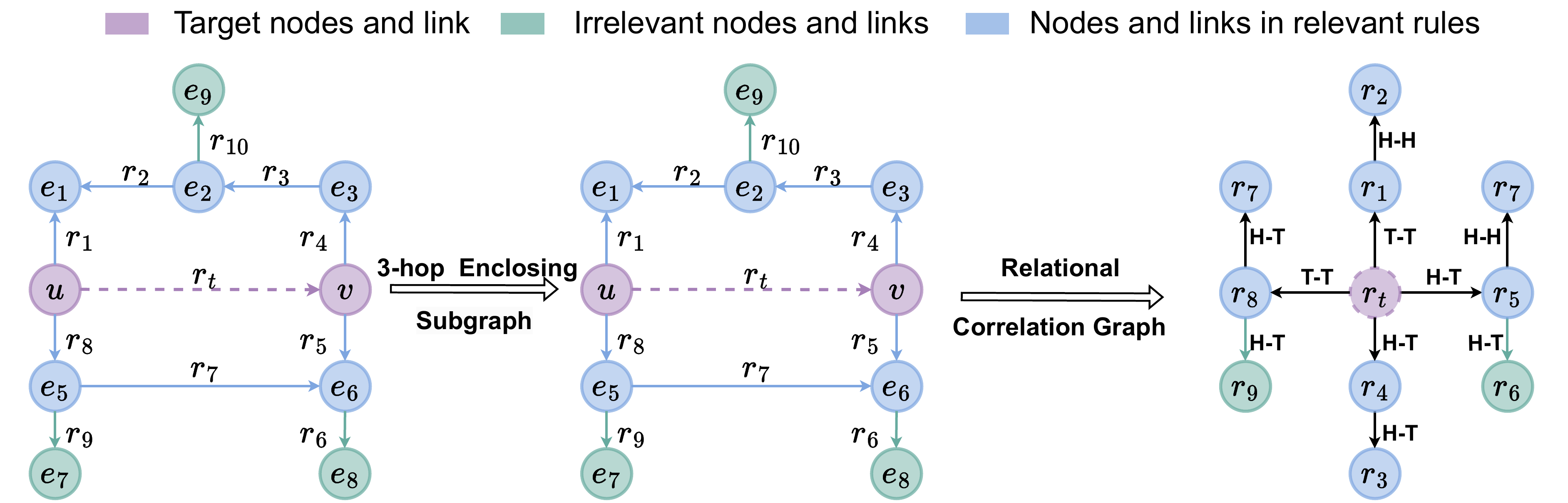}
  \label{fig:why-rf}
%   \vskip 0.225in
  \caption{The $3$-hop enclosing subgraph and corresponding relational correlation graph via adapting the enclosing subgraph {method}.}\label{fig:why-30}
\end{subfigure}
\medskip
\begin{subfigure}{2\columnwidth}
\centering
  \includegraphics[width=400pt]{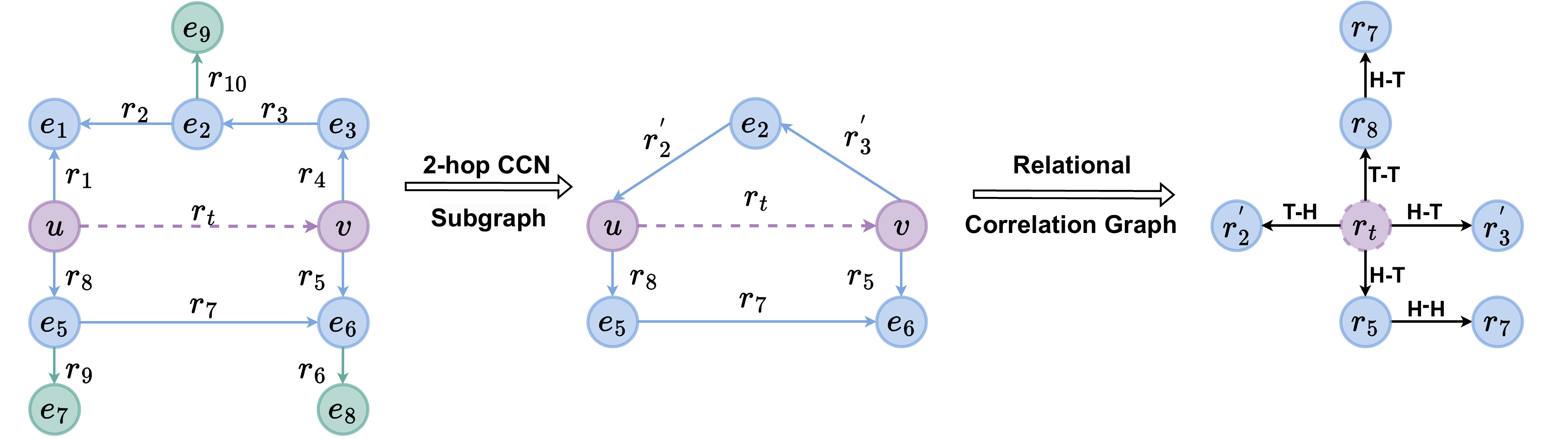}
  \label{fig:why-re}
%   \vskip -0.2in
  \caption{The $2$-hop CCN subgraph and corresponding relational correlation graph via adapting the CCN subgraph {method}.}\label{fig:why-31}
\end{subfigure}\hfil % <-- added
\medskip
\begin{subfigure}{2\columnwidth}
\centering
  \includegraphics[width=400pt]{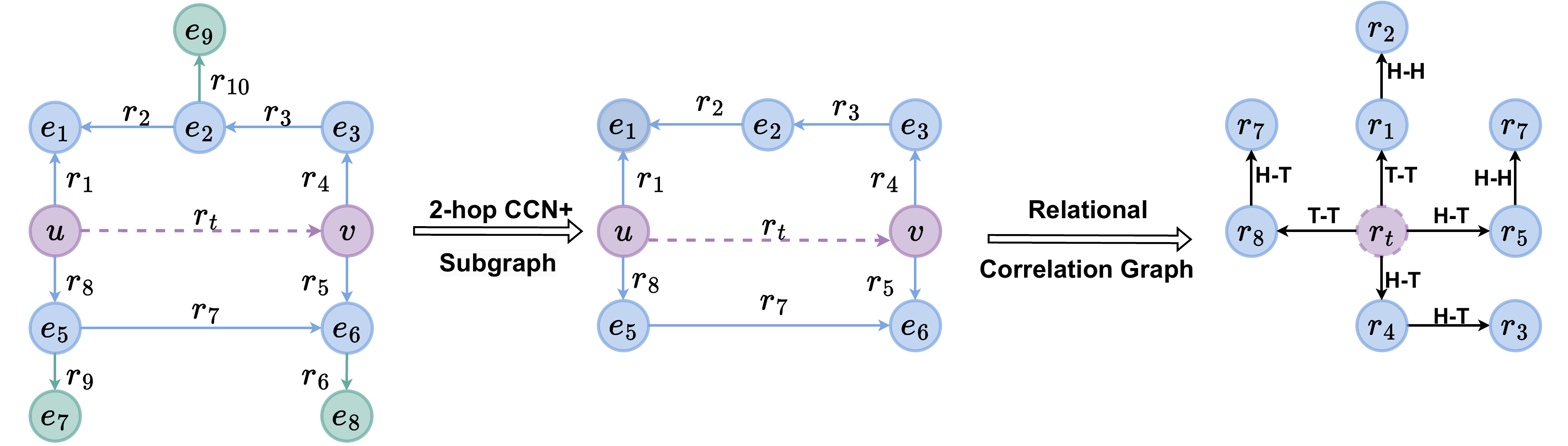}
  \label{fig:why-qa}
%   \vskip -0.2in
  \caption{The $2$-hop CCN+ subgraph and corresponding relational correlation graph via adapting the CCN+ subgraph {method}.} \label{fig:why-32}
\end{subfigure}\hfil % <-- added
\vskip -0.15in
\caption{Comparison between the $3$-hop enclosing subgraph {method}, the $2$-hop CCN subgraph {method}, and the $2$-hop CCN+ subgraph method in the same original knowledge graph.}
\label{fig:3030}
\vskip -0.1in
\end{figure*}

        \subsubsection{Motivation} \label{3.5.1}

        To further elaborate on our motivation, we introduce a rule-based learning perspective to analyze the reasoning process within the subgraph. 

         Rule-based methods aim at mining first-order logical rules based on reasoning paths from knowledge graphs. Such a rule consists of a head and a body, where a head is a single atom, i.e., a fact in the form of \textit{Relation(head entity, tail entity)}, and a body is a set of atoms. Given a head $R(y,x)$ and body $B_1\left(y, z_1\right) \wedge B_2\left(z_1, z_2\right) \wedge \cdots \wedge B_T\left(z_{T-1}, x\right)$, there is a rule $R(y,x) \leftarrow B_1\left(y, z_1\right) \wedge B_2\left(z_1, z_2\right) \wedge \cdots \wedge B_T\left(z_{T-1}, x\right)$ .
        A rule is connected if every atom shares at least one variable with another atom, and a rule is closed if each variable in the rule appears in at least two atoms \cite{neural-lp}. 

In paticular, we pay more attention to the closed and connected rules, i.e., the reasoning paths between target nodes, as closeness and connectedness prevent finding rules with irrelevant relations \cite{drum}. We define the length of a rule as the nodes that form the rule, and the relevant rules as those that are closed, connected, and have a length of no more than $2h+1$. The irrelevant rules are those that are either not closed, not connected, or have a length of more than $2h+1$, where $h$ is the hop of neighbors for the target nodes $u$ or $v$. We then define Complete Common Neighbor induced (CCN) subgraph as a subgraph that can effectively induce the relevant rules and eliminates the irrelevant rules via all common neighbors of the target nodes. To align with previous works\cite{grail}, we set the hop as $2$.

        As mentioned in Section \ref{3.3}, TACT\cite{tact} reasons on enclosing subgraphs \cite{grail}. The original intention of the enclosing subgraph is to \textit{eliminate the irrelevant rules} and \textit{preserve the relevant rules} around the target link. However, the enclosing subgraph cannot achieve both requirements, that is, the $2$-hop enclosing subgraph performs well on eliminating the irrelevant rules but not on preserving the relevant rules, while the $3$-hop enclosing subgraph performs well on preserving the relevant rules but not on eliminating the irrelevant rules.
        % This local subgraph extraction method yields sound results as it can effectively eliminate the irrelevant rules which contribute little to the plausibility of the target link. However, the standard of relevant rules preservation is overly-stringent, which incurs vast relevant rules loss around the target link. 
        We propose two toy examples as illustrations and further conduct statistical analyses to reveal the prevalence of the phenomenon in Section \ref{3.5.2}.
        
        \subsubsection{Analysis of the enclosing subgraph} \label{3.5.2}

        From the perspective of rule mining, reasoning on the subgraph can also be seen as mining rules within the subgraph.
        
        The $2$-hop enclosing subgraph method suffers from properly mining the relevant rules.    
        As illustrated in Figure \ref{fig:en}, when extracting the $2$-hop enclosing subgraph of the original knowledge graph, the $2$-hop enclosing subgraph will eliminate all other entities and their corresponding relations. 
        To further elaborate on the prevalence of the observation, we conduct a statistical analysis by setting the neighborhood hop as $2$ on the inductive datasets in Table \ref{tab:sta}.  
From Table \ref{tab:sta}, we can see that the enclosing subgraph extraction method incurs a significant loss of the relevant rules. Specifically, for WN18RR\cite{wn18rr}, nearly half of the extracted subgraphs only contain the target nodes $u$ and $v$.   The Incomplete$\_$Ratio also supports the statement. Varied degrees of relevant rule loss are also discernible in FB15K-237\cite{conve} and NELL-995\cite{xiong2017deeppath} datasets. This may cause a significant loss of relevant nodes and relations, which hinders the model's performance.

The $3$-hop enclosing subgraph suffers from eliminating the irrelevant rules. As illustrated in Figure \ref{fig:why-30}, when extracting the $3$-hop enclosing subgraph of the original knowledge graph, the $3$-hop enclosing subgraph cannot eliminate the irrelevant nodes $e_7$, $e_8$, $e_9$ and their corresponding relations. From Table \ref{Tab: 3hop sta}, we can see that the majority of the $3$-hop enclosing subgraphs contain irrelevant rules. This may let the model overfit to the irrelevant nodes and relations,  which hinders the model performance as well.

Thus, to address these problems, we propose two Complete Common Neighbor induced subgraph extraction methods, namely the CCN subgraph {method} and the CCN+ subgraph {method}.

\subsubsection{The CCN subgraph {method} }\label{da:pip}
Hereby, instead of pruning the isolated common neighbors after calculating $\mathcal{N}_{k}(u) \cap \mathcal{N}_{k}(v)$ like enclosing subgraphs in Section \ref{sec:subex}, we label their relations with an additional distance coordinate, which represents the distance of the isolated nodes to the target nodes $u$ and $v$ in the original knowledge graph. The relation between isolated nodes and the target nodes $u$ or $v$ is the same as the isolated nodes to their adjacent nodes in the original knowledge graph, while we label them with additional distance coordinates. For instance, as shown in Figure \ref{fig:saen}, the $2$-hop CCN subgraph preserves the isolated nodes $e_2$ and replace the relations $r_2$ and $r_3$ with the equivalent relations to the target node, that is, $r_2^{'}=r_2 \oplus (d(e_2,u),d(e_2,v))$ and $r_3^{'}=r_3 \oplus (d(e_2,u),d(e_2,v))$, where $\oplus$ denotes the concatenation of vectors and $d(i,u)$ denotes the shortest distance between nodes $i$ and $u$ in the original knowledge graph without counting any path through $v$. The additional distance coordinates indicate that the isolated nodes are on one relevant rule from $u$ to $v$, whereas the enclosing subgraph method cannot preserve the isolated nodes.

\begin{algorithm}[ht]
\caption{Pseudo code for extracting CCN subgraph}.
\label{alg:cni+}
    \begin{algorithmic}[1]
    \STATE \textbf{Input} the target prediction link $(u,r_t,v)$ and the knowledge graph $\mathcal{G}$.
    \STATE Define the $k$-hop neighbor of node $u$ as $\mathcal{N}_{k}(u)$ and the distance in the original knowledge graph between node $i$ and node $u$ as $d(i,u)$. $\oplus$ denotes the concatenation of two vectors.
    \STATE Compute the intersection of $k$-hop neighbor of the target nodes $u$ and $v$ to get the common neighbor set, $S=\mathcal{N}_{k}(u) \cap \mathcal{N}_{k}(v)$.  
     \FOR {Isolated Node $i$ in $S$} 
        \STATE 
        Do the \textbf{Label Procedure} for nodes $i, u$ and $i, v$ respectively to get the labeled relevant nodes and relations.
    \ENDFOR
    
  \STATE \textbf{Label Procedure} 
  \STATE Compute $d(i,j)$ for node $i$ and the target node $j$.
  \STATE Label the isolated common neighbor $i$ with $d(i,j)$.
  \STATE Label the relation between nodes $i$ and $j$ with $r^{'}= r \oplus d(i,j)$, where $r$ is the same as the isolated common neighbor node $i$ to its adjacent $j$'s neighboring node in the original knowledge graph.
\STATE Update the Labeled node $i$ in the intersection set $S$.
\STATE \textbf{End Label Procedure}
    
    \STATE Assigning the results $S$ of S to  CCN$\_$Subgraph
 %    \IF {The length of Chain-Complete\_Enclosing\_Subgraph is zero }
 %  \STATE  Union the subgraph with  
 % $ \mathcal{B}_1(u) $ and $ \mathcal{B}_1(v)$
 
 % \ENDIF
    \RETURN CCN$\_$Subgraph
    \end{algorithmic}
\end{algorithm}

As shown in Figure \ref{fig:saen},
compared with the $2$-hop enclosing subgraph, after adopting the $2$-hop CCN subgraph extraction method, isolated common neighbor $e_2$ will be preserved as an indicator for the existence of the rule from head entity $u$ to the tail entity $v$ through $e_2$. It preserves complete common neighbors and more relevant relations from the subgraph efficiently. And compared with the $3$-hop enclosing subgraph, as shown in Figure \ref{fig:why-31}, the $2$-hop CCN subgraph effectively eliminates the irrelevant nodes $e_7$, $e_8$, $e_9$ and their corresponding relations. We summarize the procedure of CCN subgraph algorithm in Algorithm \ref{alg:cni+}. And
we call TACO that reasons on CCN subgraph as TACO$_{CCN}$, abbreviated as TACO.

\begin{algorithm}[ht]
\caption{Pseudo code for extracting CCN+ subgraph}.
\label{alg:enclo}
    \begin{algorithmic}[1]
    \STATE \textbf{Input} the target prediction link $(u,r_t,v)$ and the knowledge graph $\mathcal{G}$.
    \STATE Define $\mathcal{N}_i(u)$ as $i$ hop neighbor of the node $u$, $\mathcal{B}_i(u)$ as the $i^{th}$ hop neighbor of $u$ and $\mathcal{N}_0(u)=u$ as the zero hop neighbor of $u$.
     \FOR {$i=1,2,...,k$} 
        \STATE $\mathcal{B}_i(u)$=$\mathcal{N}_i(u)\setminus \mathcal{N}_{i-1}(u)$,
        $\mathcal{B}_i(v)$=$\mathcal{N}_i(v)\setminus \mathcal{N}_{i-1}(v)$
    \ENDFOR
    \STATE Do the \textbf{Distilled Procedure} for node $u$ and $v$ to get distilled relevant neighbors $\mathcal{N}_i(u)$ and $\mathcal{N}_i(u)$, respectively.
  \STATE \textbf{Distilled Procedure} \FOR {$i = k-1,..,1$}
    \FOR {node in $\mathcal{N}_i(u)$}
    \IF {$\mathcal{N}_1(node)\cap \mathcal{B}_{i+1}(u)=\phi$}
   \STATE $\mathcal{N}_i(u) = \mathcal{N}_i(u) \setminus node$
    \ENDIF
    \ENDFOR
    \ENDFOR
    \STATE \textbf{End Distilled Procedure}
    \STATE Compute the $k$ hop common neighbor sets,  $Int\_k = \mathcal{N}_k(u)\bigcap\mathcal{N}_k(v)$
    \STATE Compute the $i^{th}$ hop $ (i=0,1\cdots,k)$ distilled union neighbors, $Distilled\_k = \bigcup_{i=0}^{k-1}\mathcal{N}_i(u)\bigcup_{i=0}^{k-1}\mathcal{N}_i(v)$
    
    \STATE Compute the CCN+\_Subgraph = $Int\_k \bigcup Distilled\_k$
 %    \IF {The length of Chain-Complete\_Enclosing\_Subgraph is zero }
 %  \STATE  Union the subgraph with  
 % $ \mathcal{B}_1(u) $ and $ \mathcal{B}_1(v)$
 
 % \ENDIF
    \RETURN CCN+$\_$Subgraph
    \end{algorithmic}
\end{algorithm}

\subsubsection{The CCN+ subgraph {method} }\label{ca:pip}
The CCN subgraph preserves the proportion of relevant rules in an efficient approach, while it still exists the relevant rule loss issue. As illustrated in Figure \ref{fig:saen} and Figure \ref{fig:why-31}, the entities $e_1$ and $e_3$ have been eliminated when executing this method. Thus, we further propose the CCN+ subgraph method to fully mine and preserve the relevant rules from the target head entity $u$ and the target tail entity $v$. Specifically, we notice that the nodes in the $i^{th}$ hop of the relevant rules are linked by the nodes in the $(i+1)^{th}$ hop of the relevant rules. Based on this observation, we propose the CCN+ {method} to fully preserve the relevant rules.

The CCN+ method can effectively distinguish the relevant and irrelevant nodes. As Figures \ref{fig:03} and \ref{fig:why-32} illustrate, The CNN+ subgraph can properly preserve the relevant rules and eliminate the irrelevant rules. %And we further notice that some types of relations between $u$ and $v$ are strongly correlated with their one hop neighbors\cite{sketch}. Thus we integrate the union of one hop neighbors surrounding $u$ and $v$ under the condition that only the target nodes $u$ and $v$ are extracted within the subgraph. 
We summarize the procedure of the CCN+ subgraph algorithm in Algorithm \ref{alg:enclo}. And we call TACO that reasons on  CCN+ subgraph as TACO$_{CCN+}$, abbreviated as TACO$_+$.

\begin{table}[ht]
		\caption{Statistics of inductive benchmarks. We use \#E and \#R, and \#TR to denote the number of entities, relations, and triples, respectively.}
		\centering
		\resizebox{1.0\columnwidth}{!}{
			\begin{tabular}{l l *{9}{c}}
				\toprule
				& &\multicolumn{3}{c}{\textbf{WN18RR}}&  \multicolumn{3}{c}{\textbf{FB15k-237}} & \multicolumn{3}{c}{\textbf{NELL-995}}\\
				\cmidrule(lr){3-5}\cmidrule(lr){6-8}\cmidrule(lr){9-11}
				& &\#R &\#E &\#TR     &\#R &\#E &\#TR    &\#R &\#E &\#TR   \\
				\midrule
				\multirow{2}{*}{v1} & train & 9 & 2746 & 6678 &   183 & 2000 & 5226 &  14 & 10915 & 5540\\
				& test  & 9 & 922 & 1991 &   146 & 1500 & 2404 &  14 & 225 & 1034\\
				\midrule
				\multirow{2}{*}{v2} & train & 10 & 6954 & 18968 &  203 & 3000 & 12085 &  88 & 2564 & 10109\\
				& test  & 10 & 2923 & 4863 &   176 & 2000 & 5092 &  79 & 4937 & 5521\\
				\midrule
				\multirow{2}{*}{v3} & train & 11 & 12078 & 32150 & 218 & 4000 & 22394 &  142 & 4647 & 20117\\
				& test  & 11 & 5084 & 7470 &   187 & 3000 & 9137 &  122 & 4921 & 9668\\
				\midrule
				\multirow{2}{*}{v4} & train & 9 & 3861 & 9842 &   222 & 5000 & 33916 &  77 & 2092 & 9289\\
				& test  & 9 & 7208 & 15157 &   204 & 3500 & 14554 &  61 & 3294 & 8520\\
				\bottomrule
			\end{tabular}
		}
		\label{Tab:ind-data}
	\end{table}

	\begin{table*}[ht]
		\caption{AUC-PR results on inductive benchmark datasets. The results of Neural-LP, DURM, RuleN, GraIL, and CoMPLIE are taken from  the CoMPLIE \cite{compile} and the results of SNRI are taken from SNRI \cite{SNRI}.} \label{Tab: AUC-PR}
		\centering
		\resizebox{2.0\columnwidth}{!}{
			\begin{tabular}{l c c c c c  c c c c  c c c c } 
				\toprule
				&\multicolumn{4}{c}{\textbf{WN18RR}}&  \multicolumn{4}{c}{\textbf{FB15k-237}} & \multicolumn{4}{c}{\textbf{NELL-995}}\\
				\cmidrule(lr){2-5} \cmidrule(lr){6-9} \cmidrule(lr){10-13}
				& v1    & v2    & v3    &  v4   & v1    & v2    & v3    &  v4   & v1    & v2    & v3    &  v4 \\
				\midrule
				Neural LP       & 86.02 & 83.78 & 62.90 & 82.06 & 69.64 & 76.55 & 73.95 & 75.74 & 64.66 & 83.61 & 87.58 & 85.69 \\
				DRUM            & 86.02 & 84.05 & 63.20 & 82.06 & 69.71 & 76.44 & 74.03 & 76.20 & 59.86 & 83.99 & 87.71 & 85.94 \\
				RuleN           & 90.26 & 89.01 & 76.46 & 85.75 & 75.24 & 88.70 & 91.24 & 91.79 & 84.99 & 88.40 & 87.20 & 80.52 \\
				GraIL           & 94.32 & 94.18 & 85.80 & 92.72 & 84.69 & 90.57 & 91.68 & 94.46 & 86.05 & 92.62 & 93.34 & 87.50 \\ 
                CoMPILE & 98.23 & {99.56} & 93.60 & {99.80} & 85.50 & 91.86 & 93.12 & 94.90 & 80.16 & 95.88 & 96.08 & 85.48 \\
                NBFNet & 98.39 & 98.96 & 94.37 & 99.12 & 93.06 & 96.88 & 97.05 & 97.83 & 98.30 & 98.22 & 97.89 & 98.22\\
                SNRI &99.10  &99.92  &94.90 & 99.61 &86.69 &91.77  &91.22& 93.37 &-&-&-&-\\
                \midrule
				TACT-base    & {98.11} & 97.11 & 88.34 & {97.25} & 87.36 & {94.31} & {97.42} & 98.09 & 94.00 & 94.44 & 93.98 & 94.93 \\
				TACT  & 96.15 & {97.95} & {90.58} & 96.15 & {88.73} &  94.20 & 97.10 & {98.30} & {94.87}& {96.58} & {95.70} & {96.12} \\
				\midrule
                TACO-base & 98.90 & 97.94 & 91.23 & 97.85 & 92.12 & 96.87 & 98.08 & 98.34 & 99.60 & {99.27} & 99.07 & 98.54 \\ 
                TACO & {99.27} & {98.41} & {93.90} & {99.27} & \textbf{93.97} & \textbf{97.40} & {98.83} & {99.39} & {99.69} & {99.17} & {99.30} & {99.07}\\
                TACO$_+$-base &\textbf{99.73} &\textbf{99.94} &95.26 &97.92 &91.73 &96.67 &98.80 &98.10 &99.30 &99.06 &99.09 &98.64\\

                TACO$_+$ & 99.14 & {99.77} & \textbf{97.75} & \textbf{99.94} & 92.54 & 96.45 & \textbf{99.66} & \textbf{99.43} & \textbf{99.95} & \textbf{99.92} & \textbf{99.98} & \textbf{99.56} \\
                \bottomrule
                
			\end{tabular}

		}
  
	\end{table*}

\section{Experiments and Analysis}
	This section is organized as follows. First, we introduce the experimental configurations, including inductive datasets in Section \ref{sec:dataset}, implementation details of the TACO in Section \ref{sec: training protocol}, and the baseline model, TACO-base in Section \ref{sec:basemodel}. Second, we demonstrate the effectiveness of our proposed approach TACO on several inductive benchmark datasets in Section \ref{sec:ilp}. Finally, we conduct the ablation studies, case studies, and further experiments in Sections \ref{sec:ablation} and \ref{sec:further}.

     \subsection{Datasets}\label{sec:dataset}
     We use the benchmark datasets for link prediction proposed in GraIL \cite{grail}, which are derived from WN18RR \cite{wn18rr}, FB15k-237 \cite{conve}, and NELL-995 \cite{xiong2017deeppath}. For inductive link prediction, the training set and the testing set should have no overlapping entities. Each knowledge graph of WN18RR, FB15k-237, and NELL-995 induces four versions of inductive datasets with increasing sizes. Details of the datasets are summarized in Table \ref{Tab:ind-data}.

    % By combining the results from Tables \ref{tab:sta} and \ref{Tab:ind-data},we can observe that WN18RR dataset demonstrate a scarcity of the one-hop enclosing subgraphs and the degree of nodes in this dataset approximates around 5. Consequently, a cycle that starts from target node $u$ and consists of five nodes as illustrate in Figure \ref{fig:illustration} can approximately cover all nodes in the graph, which increases the likelihood of passing through node v and forming one-hop relevant rule. The FB15k-237 dataset evinces a high density, exemplified by an average node degree of approximately 11 on v3 dataset. A cycle that starts from target node $u$ and consists of only four nodes can approximately cover all nodes in the graph. In this scenario, the superiority of augmented subgraphs over enclosing subgraphs is not significant. The case of NELL dataset falls between the WN18RR and FB15k-237, therefore it results in moderate improvement from our augmented subgraphs.

 \subsection{Training protocol} \label{sec: training protocol}
 We randomly sample $2$-hop CCN subgraphs for each triple when training and testing. We apply a two-layer GCN to calculate the embeddings of subgraphs. The embedding dimension of node entities and relations is set to 32. The margins in the loss functions are set to 8, 16, and 10 for WN18RR, FB15k-237, and NELL-995, respectively. Implementation details  are summarized in Appendix \ref{imp_details}.
 
 \subsection{The baseline model}\label{sec:basemodel}

 To evaluate the effectiveness of the proposed relational correlation module, we propose a baseline called TACO-base, which scores a triple $(u, r_t, v)$ only relying on the output of the RCN module. That is, the score function is 
	\begin{align*}
		f_{\text{base}}(u, r_t, v) = \textbf{r}_t^{F} \textbf{W}_{base}
	\end{align*}
	where $\textbf{W}_{\text{base}}\in \mathbb{R}^{d\times 1}$ is weight parameters.

	\subsection{Inductive Link Prediction} \label{sec:ilp}
	We evaluate the models on both classification and ranking metrics. For both metrics, we compare our method to several state-of-the-art methods, including Neural LP \cite{neural-lp}, DRUM \cite{drum}, RuleN\cite{rulen}, GraIL \cite{grail}, CoMPILE \cite{compile}, and SNRI\cite{SNRI}.

\subsubsection{Classification metric}

We use the area under the precision-recall curve (AUC-PR) as the classification metric following GraIL \cite{grail}. We replace the head or the tail entity of every test triple with a random entity to sample the corresponding negative triples. Then we score the positive triples with an equal number of negative triples to calculate AUC-PR following GraIL \cite{grail}. To make the results more reliable, we run each experiment five times with different random seeds and report the mean results.

        From the AUC-PR results in Table \ref{Tab: AUC-PR}, we observe that on the twelve versions of the three datasets, our model TACO-base and TACO have reached the optimum in all twelve AUC-PR values. Specifically, for both CCN and CCN+ methods, TACO outperforms rule-based baselines, including Neural LP, DRUM, and RuleN by a significant margin. Compared with the subgraph-based method GraIL, CoMPILE, TACT, and SNRI, the best performance TACO can achieve average AUC-PR improvements of 7.58\%, 7.27\%, 9.98\%; 1.54\%, 6.27\%, 10.47\% ; 4.13\%, 3.03\%, 4.04\% and 0.98\%, 6.85\% , ``-'' on three datasets respectively, which demonstrates the superiority of TACO. And it also outperforms the path-based method NBFNet with average AUC-PR improvements of 1.63\%; 1.41\%;1.70\% on all three datasets respectively. 
        
        As TACO-base totally relies on the relational correlation module to perform link prediction, the results demonstrate the effectiveness of our proposed model for inductive link prediction. It also means that the RCN module can capture the relation correlations effectively. TACO further improves the performance of TACO-base in the majority of datasets and achieves further improvement against GraIL, CoMPILE, and NBFNet on the benchmark datasets. The experiments demonstrate the effectiveness of modeling edge-level topology-aware correlations between relations in TACO for the inductive link prediction task.

	\begin{table*}[ht]
		\caption{Hits@10 results on inductive benchmarks datasets. The results of Neural LP, DURM, RuleN, GraIL, and CoMPILE are taken from CoMPILE\cite{compile}. The results of SNRI are taken from SNRI\cite{SNRI}. }\label{Tab: Ranking}
		\centering
		\resizebox{2.0\columnwidth}{!}{
			\begin{tabular}{l c c c c c  c c c c  c c c c } 
				\toprule
				&\multicolumn{4}{c}{\textbf{WN18RR}}&  \multicolumn{4}{c}{\textbf{FB15k-237}} & \multicolumn{4}{c}{\textbf{NELL-995}}\\
				\cmidrule(lr){2-5} \cmidrule(lr){6-9} \cmidrule(lr){10-13}
				& v1    & v2    & v3    &  v4   & v1    & v2    & v3    &  v4   & v1    & v2    & v3    &  v4 \\
				\midrule
				Neural LP       & 74.37 & 68.93 & 46.18 & 67.13 & 52.92 & 58.94 & 52.90 & 55.88 & 40.78 & 78.73 & 82.71 & {80.58} \\
				DRUM            & 74.73 & 68.93 & 46.18 & 67.13 & 52.92 & 58.73 & 52.90 & 55.88 & 19.42 & 78.55 & 82.71 & {80.58} \\
				RuleN           & 80.85 & 78.23 & 53.39 & 71.59 & 49.76 & 77.82 & 87.69 & 85.60 & 53.50 & 81.75 & 77.26 & 61.35 \\
				GraIL           & 82.45 & 78.68 & 58.43 & 73.41 & 64.15 & 81.80 & 82.83 & 89.29 & 59.50 & 93.25 & 91.41 & 73.19 \\ 
                CoMPILE         & 83.60 & 79.82 & 60.69 & 75.49 & 67.66 & 82.98 & 84.67 & 87.44 & 58.38 & {93.87} & 92.77 & 75.19\\
                SNRI    & 87.23 & 83.10 & 67.31 & 83.32 & 71.79 & 86.50 & 89.59 & 89.39 & - & - & - & -\\
                % NBFNet          &\underline{94.80}& {90.50} & \underline{89.30}& {89.00} & \textbf{83.40} & \textbf{94.90} & \textbf{95.10} & \textbf{96.00} &- &- &- &- \\
                \midrule
                TACT-base & 81.38 & 77.64 & 58.76 & 73.47 & 64.61 & 82.72 & 86.72 & 89.71 & 58.50 & {92.16} & 91.04 & 71.33
                \\
                TACT & 81.69 & 80.06 & 62.32 & 74.69 & {65.48} & {84.25} & 85.62 & 88.04 & {58.03} & 91.17 & {90.72} & {73.42}\\

				\midrule
				TACO-base    & {90.08} & \textbf{92.08} & \textbf{89.67} & {90.69} & 78.05 & {87.17} & {88.32} & 87.15 & 60.50 & {94.01} & {93.30} & {82.79} \\
				TACO  & 91.09 & {91.40} & {85.45} & 88.59 & \textbf{82.01} &  86.19 & 86.53 & {89.96} & {58.50} & {93.30} & \textbf{94.48} & \textbf{84.33}\\
                TACO$_+$-base & {93.22}  & 88.73 & {79.84} & {90.71} & {79.93} & \textbf{88.52} & \textbf{89.62} & 90.84 & \textbf{62.00} & 93.71 & {93.11} & {79.09}\\ 
				TACO$_+$  & \textbf{95.12} & {90.46} & {82.02} & \textbf{91.37} & {75.93} &  86.83 & {85.59} & \textbf{91.17} & {60.81} & \textbf{94.11} & {94.28} & {77.19} \\
                \bottomrule
			\end{tabular}
		}
	\end{table*}

 \subsubsection{Ranking metric}

 	We further evaluate the model for inductive link prediction to verify the effectiveness of modeling relational correlations in TACO. We rank each test triple among 50 other randomly sampled negative triples. Specifically, for a given relation prediction $(u, r_t, ?)$ or $(?, r_t, v)$  in the testing set, we rank the ground-truth triples $(u, r_t, v)$ against all other candidates negative triples. Following the standard procedure in prior work \cite{transe}, we use the filtered setting, which does not take any existing valid triples into account at ranking. We choose Hits at N (H@N) as the evaluation metric. Following GraIL\cite{grail}, to make the results more reliable, we run each experiment five times with different random seeds and report the mean results.

	Table \ref{Tab: Ranking} shows the results H@10 on WN18RR, FB15k-237, and NELL-995 from version 1 to version 4. As we can see, TACO significantly outperforms rule learning based methods Neural-LP, DRUM, and RuleN \cite{neural-lp, drum, rulen}; subgraph based methods GraIL, TACT, CoMPILE, and SNRI \cite{grail, compile,tact,SNRI} in all datasets by a significant margin. %and can outperform the path-based method NBFNet \cite{nbfnet} in WN18RR dataset and achieve the suboptimal results in FB15K-237 dataset for the inductive link prediction. 
 In this scenario, TACO achieves a maximum of 27.35\% (on WN18RR v3) and 15.88\% (on WN18RR v4).
 Compared with the subgraph-based method GraIL, CoMPILE, TACT, and SNRI, the best performance TACO can achieve average AUC-PR improvements of 18.82\%, 8.31\%, 4.37\%; 17.16\%, 7.15\%, 3.65\% ; 17.38\%, 6.29\%, 3.62\% and 11.82\%, 3.51\% , ``-'' on three datasets respectively, which demonstrates the superiority of TACO.
 % The improvements on WN18RR and NELL-995 are more significant than FB15k-237.
 %And TACO is the first subgraph-based methods that is capable of outperforming the path-based method NBFNet in WN18RR dataset and can achieve the suboptimal results in FB15k-237 dataset.
    
    As the aforementioned Table \ref{tab:sta}, the Incomplete$\_$Ratio serves as an indicator of the relevant rule loss degree incurred by the enclosing subgraph method. The smaller the Incomplete$\_$Ratio value is, the more relevant rule loss is caused by the enclosing subgraph method. FB15k-237 exhibits the highest Incomplete$\_$Ratio values in both training and testing sets, thus the proposed CCN subgraph methods exhibit relatively modest improvements compared with the datasets WN18RR and NELL995. 
    
    The experiments also show that GraIL has difficulty in modeling relational semantics, especially when the number of relation types is large. In contrast, TACO can model the complex patterns of relations by exploiting correlations between relations in knowledge graphs. TACO-base also significantly outperforms the existing subgraph-based state-of-the-art methods.
    Notably, in ranking tasks, a large number of negative samples can bring about significant performance improvements. Subgraph-based methods use a negative sampling rate of 1, while the path-based method NBFnet uses a negative sampling rate of 32. Therefore, we do not compare the path-based method in the ranking task.
    % The subgraph-based methods only needs to extract the subgraph once for inductive link prediction and the number of negative samples can also be reduced to $1$ for each positive sample while the path-based NBFNet benefits from a larger number of negative sampling. 
    Also, the running time of the path-based method is longer than the subgraph-based methods due to the large negative sampling rate. We report the comparison in Section  \ref{sec:running}.

	\begin{table*}[ht]
		\caption{Ablation results of triples classification on the inductive benchmark datasets extracted from WN18RR, FB15k-237, and NELL-995. ``TACO$_+$  w/o RA" represent the baseline that omits the relation aggregation in TACO$_+$. ``TACO$_+$  w/o RC" represent the baseline that performs relation aggregation without modeling correlations between relations in TACO$_+$.}\label{Tab: ablation-1}
		\centering
		\resizebox{2.0\columnwidth}{!}{
			\begin{tabular}{l c c c c c  c c c c  c c c c } 
				\toprule
				&\multicolumn{4}{c}{\textbf{WN18RR}}&  \multicolumn{4}{c}{\textbf{FB15k-237}} & \multicolumn{4}{c}{\textbf{NELL-995}}\\
				\cmidrule(lr){2-5} \cmidrule(lr){6-9} \cmidrule(lr){10-13}
				& v1    & v2    & v3    &  v4   & v1    & v2    & v3    &  v4   & v1    & v2    & v3    &  v4 \\
                TACO$_+$  w/o RA & 97.75 & 97.33 & 91.95& 97.64& 92.06 & 95.83 & 97.05 & 97.42 & 98.06 & 97.33 & 96.46 & 98.26\\
                TACO$_+$  w/o RC &98.82 & 93.17 & 90.29 & 97.85 & 92.39 & 95.97 & 96.69 & 99.22 & 98.73 & 97.32 & 98.81 & 98.39\\
                \midrule
                TACO$_+$ & \textbf{99.14} & \textbf{99.77} & \textbf{97.75} & \textbf{99.94} & \textbf{92.54} & \textbf{96.45} & \textbf{99.66} & \textbf{99.43} & \textbf{99.95} & \textbf{99.92} & \textbf{99.98} & \textbf{99.56} \\
				\bottomrule
			\end{tabular}
		}
	\end{table*}

	\begin{table*}[ht]
		\caption{Ablation study for investigating effect of each part of input embeddings in the scoring network of TACO$_+$. The symbols $\textbf{n}$, $\textbf{g}$, and $\textbf{r}$ represent the node embedding, the graph embedding, and the final relation embedding, respectively.}\label{Tab: ablation-2}
		\centering
		\resizebox{2.0\columnwidth}{!}{
			\begin{tabular}{c c c  c c c c c  c c c c  c c c c } 
				\toprule
				&&&\multicolumn{4}{c}{\textbf{WN18RR}}&  \multicolumn{4}{c}{\textbf{FB15k-237}} & \multicolumn{4}{c}{\textbf{NELL-995}}\\
				\cmidrule(lr){4-7} \cmidrule(lr){8-11} \cmidrule(lr){12-15}
				\textbf{n} & \textbf{g} & \textbf{r} & v1 & v2 & v3 & v4 & v1 & v2 & v3 & v4 & v1 & v2 & v3 & v4\\
				\midrule
				&&\checkmark           & 98.90 & 97.94 & 95.03 & 97.85 & 92.12 & 96.87 & 98.08 & 98.34 & 99.60 & {99.27} & 99.07 & 98.54 \\ 
				&\checkmark&\checkmark & {96.36} & \textbf{99.86} & {95.10}    & 97.10  & 92.62 & 97.35 & {99.10} & {98.72} & 99.01  & \textbf{99.96} & 98.07 & {99.14} \\
				\checkmark&&\checkmark   & {97.32} & 96.55 & 96.31    & 98.01 & \textbf{93.30}&	\textbf{97.53} & {98.97} & 98.75 & {99.81}  & {99.59} & {99.32} & {99.12} \\ \midrule
				\checkmark&\checkmark&\checkmark& \textbf{99.14} & {99.77} & \textbf{97.75} & \textbf{99.94} & {92.54} & {96.45} & \textbf{99.66} & \textbf{99.43} & \textbf{99.95} & {99.92} & \textbf{99.98} & \textbf{99.56}\\
				\bottomrule
			\end{tabular}
		}
	\end{table*}

	\subsection{Ablation Studies}\label{sec:ablation}
 
	\begin{table*}[ht]
		\caption{Ablation study for investigating the effect of each part of different relation correction patterns in TACO$_+$. The symbol ``\checkmark'' denotes the types of  relation correlation patterns that are taken into account by the RCN module. }\label{Tab: ablation-NUM}
		\centering
		\resizebox{2.0\columnwidth}{!}{
			\begin{tabular}{c c c c c c  c c c c c  c c c c  c c c c } 
				\toprule
				&&&&&&\multicolumn{4}{c}{\textbf{WN18RR}}&  \multicolumn{4}{c}{\textbf{FB15k-237}} & \multicolumn{4}{c}{\textbf{NELL-995}}\\
				\cmidrule(lr){7-10} \cmidrule(lr){11-14} \cmidrule(lr){15-18}
				\textbf{H-T} & \textbf{T-T} & \textbf{H-H} &\textbf{T-H}&\textbf{PARA} &\textbf{LOOP} & v1 & v2 & v3 & v4 & v1 & v2 & v3 & v4 & v1 & v2 & v3 & v4\\
				\midrule
				\checkmark&&&&& &{98.92} & {97.62} & {96.33} & {96.18} & {88.22} & {86.74} & {96.63} & \textbf{95.13} & {97.67} & {93.65} & {96.11} & {95.52}\\
				\checkmark&\checkmark&&&& &{98.68} & {97.98} & {97.33} & {97.79} & {87.23} & {89.81} & {97.96} & {95.16} & {97.99} & {93.42} & {96.37} & {98.69}\\
				\checkmark&\checkmark&\checkmark&&& &{98.81} & {98.08} & \textbf{99.00} & {98.55} & {91.10} & {89.80} & {98.64} & {95.89} & \textbf{99.98} & {94.01} & {96.43} & {98.45}\\
				\checkmark&\checkmark&\checkmark&\checkmark&&   & {98.26} & 99.25 & 98.98    & 99.35 & 91.61&	{91.28} & {98.62} & 98.66 & {99.13}  & {94.69} & {99.57} & {98.83} \\ 
				\checkmark&\checkmark&\checkmark&\checkmark&\checkmark& & \textbf{99.97} & {99.27} & {97.24}    & 98.86  & 91.01 & 94.15 & {99.17} & {98.86} & 99.70  & \textbf{99.94} & 99.80 & {98.71} \\
    
				\checkmark&\checkmark&\checkmark&\checkmark&\checkmark&\checkmark           & 99.14 & \textbf{99.77} & 97.75 & \textbf{99.94} & \textbf{92.54} & \textbf{96.45} & \textbf{99.66} & \textbf{99.43} & {99.95} & {99.92} & \textbf{99.98}& \textbf{99.56} \\

				\bottomrule
			\end{tabular}
		}
	\end{table*}

	In this part, we conduct the ablation studies on the relation aggregation and the topology-aware correlations between relations in Section \ref{ab: w/o ra}. To further investigate the effect of each part of input embeddings in the scoring network and different relation correlation patterns taken into account by the RCN module in Section \ref{ab: input emb} and Section \ref{ab: diff rc}.
 \subsubsection{Ablation on relation aggregation and  coefficients} \label{ab: w/o ra}
 	In our proposed method, we aggregate the relation embedding $\textbf{r}_t$ and neighborhood embedding $\textbf{r}_t^N$ to get the final relation embedding $\textbf{r}_t^F$. We omit the aggregation of neighborhood embedding, that is, we let the output of the relational correlation module be $\textbf{r}_t$. We reason on the CCN+ subgraph as an example and call this method ``TACO$_+$ w/o RA" for short.
  % \subsubsection{Ablation on relation coefficients} \label{ab: w/o rc}
  
  	Modeling topology-aware correlations between relations is one of our main contributions. We design a baseline that performs relation aggregation without modeling correlations between relations. That is, the baseline reformulates the equation \eqref{rel-agg} as 
	\begin{align*}
		\textbf{r}_t^{N} = \frac{1}{|\mathcal{N}(r_t)|}\sum_{i\in \mathcal{N}(r_t)} \textbf{r}_i
	\end{align*}
	where $\mathcal{N}(r_t)$ represents the set of neighborhood relations of $r_t$. We reason on the CCN+ subgraph as an example and call this baseline ``TACO$_+$ w/o RC" for short.

 	Table \ref{Tab: ablation-1} shows the results on three benchmark datasets. The experiments demonstrate the effectiveness of modeling topology-aware correlations between relations in TACO$_+$. As correlations between relations are common in knowledge graphs, the relation aggregation in ``TACO$_+$ w/o RC" can take advantage of neighborhood relations, which is helpful for inductive link prediction. Our proposed method further distinguishes the correlation patterns and correlation coefficients between relations, which makes the learned embeddings of relations more expressive for inductive link prediction. As we can see, TACO$_+$ significantly and consistently outperforms ``TACO$_+$ w/o RA" and ``TACO$_+$ w/o RC" on all the inductive datasets. 

\begin{table}[ht]
		\caption{Some relations and their top 3 relevant relations. The relations are taken from WN18RR and NELL-995. We use CP to represent the correlation pattern and use CC to the represent correlation coefficient.}
		\resizebox{1.0\columnwidth}{!}{
			\begin{tabular}{l c c c}
				\toprule
				Target relation  &  Most relevant relations & CP & CC \\
				\midrule
				
				&\textit{\_has\_part}     & PARA                  &0.68 \\
				\textit{\_member\_meronym} &\textit{\_similar\_to} & H-H          &0.39 \\
				&\textit{\_synset\_domain\_topic\_of} & T-H     &0.31 \\
				\midrule  
				&\textit{\_similar\_to}   & LOOP                 &0.40 \\
				\textit{\_similar\_to} & \textit{\_member\_meronym} & H-H        &0.39 \\
				&\textit{\_instance\_hypernym} & T-T           &0.35 \\
				\midrule
				& \textit{television\_station\_affiliated\_with} & H-H &0.52 \\
				\textit{head\_quartered\_in} & \textit{head\_quartered\_in} & PARA &0.33 \\
				& \textit{acquired}       & T-H                 &0.30 \\
				\midrule
				&\textit{\_hypernym}     & T-T       &0.71 \\
				\textit{\_member\_of\_domain\_usage} &\textit{\_similar\_to} & H-H  &0.50 \\
				&\textit{\_derivationally\_related\_form} & H-T     &0.42 \\

				\bottomrule
    
		\end{tabular}}
		\label{tab:ex2}
	\end{table}

	\begin{table*}[ht]
		\caption{Comparison between the $2$-hop TACO, $2$-hop TACO$_+$, and $3$-hop enclosing subgrah TACT in the triple classification task.}\label{Tab: comparison}
		\centering
		\resizebox{2.0\columnwidth}{!}{
			\begin{tabular}{l c c c c c  c c c c  c c c c } 
				\toprule
				&\multicolumn{4}{c}{\textbf{WN18RR}}&  \multicolumn{4}{c}{\textbf{FB15k-237}} & \multicolumn{4}{c}{\textbf{NELL-995}}\\
				\cmidrule(lr){2-5} \cmidrule(lr){6-9} \cmidrule(lr){10-13}
				& v1    & v2    & v3    &  v4   & v1    & v2    & v3    &  v4   & v1    & v2    & v3    &  v4 \\
                 % $2$ hop TACT  & 96.15 & {97.95} & {90.58} & 96.15 & {88.73} &  94.20 & 97.10 & {98.30} & {94.87}& {96.58} & {95.70} & {96.12} \\
                $2$-hop TACO  &\textbf{99.27} & 98.41 & 93.90 & 99.27 & \textbf{93.97} & \textbf{97.40} & 98.83 & 99.39 & 99.69 & 99.17 & 99.30 & 99.07\\
                $2$-hop TACO$_+$  & 99.14 & \textbf{99.77} & \textbf{97.75}& \textbf{99.94}& 92.54 & 96.45 & \textbf{99.66} & \textbf{99.43} & \textbf{99.95} & \textbf{99.92} & \textbf{99.98} & \textbf{99.56}\\
                $3$-hop TACT & {97.79} & {96.43} & {88.15} & {81.57} & {88.34} & {94.42} & {97.16} & {98.16} & {93.95} & {95.97} & {93.83} & {94.76} \\
				\bottomrule
			\end{tabular}
		}
	\end{table*}

	\begin{table}[ht]
		\caption{The hits@10 of the frequency-based method, TACT, and the proposed TACO$_+$.}
		\begin{center}
			\resizebox{1.0\columnwidth}{!}{
				\begin{tabular}{c c c c}
					\toprule
					&  WN18RR(v1)  & FB15k-237(v1) & NELL-995(v1) \\
					\midrule
					frequency-based & 76.30 & 20.10 & 47.00 \\
					TACT & {81.69} & {65.48} & {58.03} \\
                        TACO$_+$ & \textbf{95.12} & \textbf{75.93} & \textbf{60.81}\\
					\bottomrule
			         \end{tabular}}
			\label{tab:mrr-frequency}    
		\end{center}
	\end{table}

	\begin{table}[ht]
		\caption{The results for inductive link prediction of GraIL, TACT, and the proposed TACO$_+$ on the dataset YAGO3-10.}
		\begin{center}
			\resizebox{0.7\columnwidth}{!}{
				\begin{tabular}{c c c c}
					\toprule
					&  AUC-PR & MRR & Hits@1 \\
					\midrule
					GraIL & 0.634 & 0.158 & 0.048 \\
					TACT & {0.915} & {0.406} & {0.140} \\
                        TACO$_+$ & \textbf{0.930} & \textbf{0.471} &\textbf{0.184}\\
					\bottomrule
			             \end{tabular}}
			\label{tab:yago3}    
		\end{center}
	\end{table}

	\begin{table}[ht]
		\caption{The running time of GraIL, TACT, TACO, TACO$_+$, and NBFNet on the version 1 of the inductive datasets. We measure these methods on the same device for a fair comparison. }
		\begin{center}
			\resizebox{1.0\columnwidth}{!}{
				\begin{tabular}{c c c c}
					\toprule
					&  WN18RR(v1)  & FB15k-237(v1)  & NELL-995(v1)\\
					\midrule
					GraIL & 0.05 h & 0.11 h & 0.06 h  \\

					TACT & 0.07 h & 0.13 h & 0.09 h  \\
                        TACO &0.08 h &0.15 h &0.10 h \\
                        TACO$_+$ & 0.11 h & 0.18 h & 0.12 h\\
                        NBFNet &0.24 h & 0.27 h  & 0.19 h\\
					\bottomrule

			         \end{tabular}}
	
   \label{tab:running_time}
		\end{center}
	\end{table}

\subsubsection{Ablation on the input embeddings} \label{ab: input emb}
    In the proposed method, the input of scoring network is 
	\begin{align*}
		\textbf{r}_t^{F} \oplus \textbf{e}_{\mathcal{G}(u,r_t,v)}^{(L)} \oplus \textbf{e}_u^{(L)} \oplus \textbf{e}_v^{(L)}     
	\end{align*}
 
    That is, the score of the target predicted triple is the combination of the final relation embedding $\textbf{r}_t^{F}$, the graph embedding $\textbf{e}_{\mathcal{G}(u,r_t,v)}^{(L)}$, and the node embedding $\textbf{e}_u^{(L)} \oplus \textbf{e}_v^{(L)}$.

We conduct ablation experiments to get access to the exact effect of each part of embeddings in the inductive link prediction. Table \ref{Tab: ablation-2} shows the results of scoring a triple based on different combinations of embeddings. We can see that any part of the embeddings serves its own distinct effect on the final performance results. When performing inductive relation prediction, using the final relation embedding solely---which is exactly the method of our proposed baseline model TACO$_+$-base---can get a fairly good performance. This shows that modeling the semantic correlation between relations is beneficial to make the correct relation prediction in the inductive setting. The various embedding combinations will further promote the performance on different datasets. We can set the used embedding combination properly to further improve the performance of TACO to get the best results on different benchmark datasets for inductive link prediction.

  \subsubsection{Ablation on Different Relation Correlations}\label{ab: diff rc}
  In the proposed method, we take into account all different relation correlations of TACO$_+$. We conduct ablation experiments to figure out the influence on different relation correlations on TACO$_+$.
% Thus we reduce the number of relation correlations and investigate the exact effect of each on relation correlation in the inductive link prediction. As the table \ref{Tab: ablation-NUM} shows, Even if considering fewer relation correlation patterns on some datasets report better performance, considering all relation correlation patterns is more stable and acceptable overall. Some datasets show significant performance degradation when a particular relation correlation pattern gets overlooked, e.g., the ``H-H” pattern on FB15K-237-v1 dataset. Therefore, taking all relation correlation patterns into account gives more stable and robust results.
 By reducing the number of relation correlations and analyzing the impact of each correlation on inductive link prediction, we aim to investigate the effect of relation correlations in detail. As shown in Table \ref{Tab: ablation-NUM}, although some datasets exhibit improved performances when considering fewer relation correlation patterns, taking into account all relation correlation patterns leads to more stable and generally acceptable results. In specific datasets, omitting a particular relation correlation pattern, such as the "H-H" pattern on the FB15K-237-v1 dataset, results in significant performance degradation. Therefore, including all relation correlation patterns in the analysis yields more stable and robust results.

	\subsection{Further Experiments}\label{sec:further}
	
	\subsubsection{Case Studies}
	
	We select some relations and show the top three relevant relations of them in Table \ref{tab:ex2}. Recall that the sum of correlation coefficients for each correlation pattern is equal to 1. The results show that TACO can learn some correct correlation patterns and assign them high correlation coefficients. For example, among all the neighboring relations of ``{\_member\_meronym}", ``{\_has\_part}"---which is adjacent to ``{\_member\_meronym}" in the topological pattern of parallel---gets the most significant correlation coefficient, as ``{\_has\_part}" and ``{\_member\_meronym}" have similar semantics.  Notably, this semantic results are also human-understandable, which highlights the practical interpretability of TACO.

 \subsubsection{Comparison of the 3-hop enclosing subgraph}
 % At the hop $2$, the set extracted by the $2$ hop -complete and  enclosing subgraph is two subsets of the set extracted by the $3$ hop enclosing subgraph. 
 % While the $3$ hop enclosing subgraph can preserve all the relevant rules with a length of no more than $5$, it can also preserve longer, weaker chains and irrelevant rules.
 As mentioned in Section \ref{sec:saen}, the $3$-hop enclosing sugraph suffers from eliminating the irrelevant rules, which may cause the model to overfit to the extracted irrelevant rules within the subgraph and hinder the model performance.
 To further demonstrate the effectiveness of eliminating irrelevant rules of the proposed complete subgraph, we compare between these methods. As Table \ref{Tab: comparison} shows, the $2$-hop CCN and CCN+ TACO outperforms the $3$-hop TACT consistently and significantly, which demonstrates the effectiveness of CCN subgraph {methods}.

 \subsubsection{The Frequency-based Method}

  	We conduct an experiment by ranking relations according to their frequencies and compare TACO$_+$ with the frequency-based method on the datasets. For the frequency-based method, the returned rank list for every prediction is the same, which is the rank according to the relation frequencies from high to low in the knowledge graph. In other words, the frequency-based method represents a type of data bias in the datasets. As illustrated in Table \ref{tab:mrr-frequency}, TACO$_+$ and TACT significantly outperform the frequency-based method by a significant margin on the datasets. The results demonstrate that the effectiveness of TACO$_+$ is not due to the data bias of relation frequencies in benchmark datasets.

	\subsubsection{Results on YAGO3-10}
	
	To demonstrate the effectiveness of our proposed method on a larger knowledge graph with few relations. We conduct experiments on YAGO3-10, which is a subset of YAGO3 \cite{yago3} and contains 37 relations and 123,182 entities. Table \ref{tab:yago3} shows the results for inductive link prediction of GraIL, TACT, and TACO$_+$ on YAGO3-10. As we can see, TACO$_+$ outperforms GraIL and TACT by all the metrics, which demonstrate our proposed method can effectively deal with a larger knowledge graph with few relations and Complete Common Neighbor induced subgraph also helps to improve TACO by preserving more relevant rules within the extracted subgraphs.
	
	\subsubsection{Running Time} \label{sec:running}
	
	% Table \ref{tab:running_time} shows the running time of GraIL, TACT, NBFNet, and TACO. As we can see, TACO and 
 
 % TACT and TACO would take more time than GraIL for modeling the correlations between relations but the extra computation cost is quite small. The running time of NBFNet is the longest as it reasons on the entire graph for each target link. 

Table \ref{tab:running_time} shows the running time of GraIL, TACT, TACO, and NBFNet. We can observe that TACO and TACT require more time than GraIL to model the correlations between relations but the additional computational cost is insignificant compared to the performance improvement. In contrast, NBFNet has the longest running time, as it reasons on the entire graph for each target link.

	\section{Conclusion}
	%In this paper, we propose a novel inductive reasoning approach called TACO, which can effectively exploit topology-aware correlations between relations for inductive link prediction in knowledge graphs and preserve more relevant rules from the target head to traget tail entity via our proposed co {method}. TACO classfies all relation pairs into several \textit{topological patterns}, and then use the proposed Relation Correlation Network(RCN) module to learn the importance of the different patterns for inductive link prediction on the  eclosing subgraph. Experiments demonstrate that our proposed TACO significantly outperforms several existing state-of-the-art methods on benchmark datasets for the inductive link prediction task. 

     In this paper, we propose a novel inductive reasoning approach called TACO, which effectively unifies graph-level information and edge-level
    interactions in knowledge graphs. Specifically, we prove that correlations between any two relations can be categorized into seven topological patterns and convert the original knowledge graph into RCG. Based on RCG, we then propose RCN to learn the importance of the different patterns for inductive link prediction. To further promote the performance of TACO, we propose CCN subgraph that can preserve complete relevant relations for RCG, i.e., complete topological patterns for RCN. Extensive experiments demonstrate that TACO significantly outperforms existing state-of-the-art methods on benchmark datasets for the inductive link prediction task. 

\section*{Acknowledgment}

% This work was supported in part by National Science Foundations of China grants 61822604, U19B2026, 61836006, and 62021001, and the Fundamental Research Funds for the Central Universities grant WK3490000004.

The authors would like to thank all the anonymous reviewers for their insightful comments. This work was supported in part by National Natural Science Foundations of China grants U19B2026, U19B2044, 61836011, 62021001, 61836006, and 2022ZD0119801.

% The authors would like to thank all the anonymous reviewers for their insightful comments. This work was supported in part by National Nature Science Foundations of China grants U19B2026, U19B2044, 61836011, 62021001, and 61836006, and the Fundamental Research Funds for the Central Universities grant WK3490000004.

% if have a single appendix:
%\appendix[Proof of the Zonklar Equations]
% or
%\appendix  % for no appendix heading
% do not use \section anymore after \appendix, only \section*
% is possibly needed

% use appendices with more than one appendix
% then use \section to start each appendix
% you must declare a \section before using any
% \subsection or using \label (\appendices by itself
% starts a section numbered zero.)
%

\bibliographystyle{IEEEtran}
\bibliography{ieee_jrnl}

\begin{IEEEbiography}[{\includegraphics[width=1in,height=1.25in,clip,keepaspectratio]{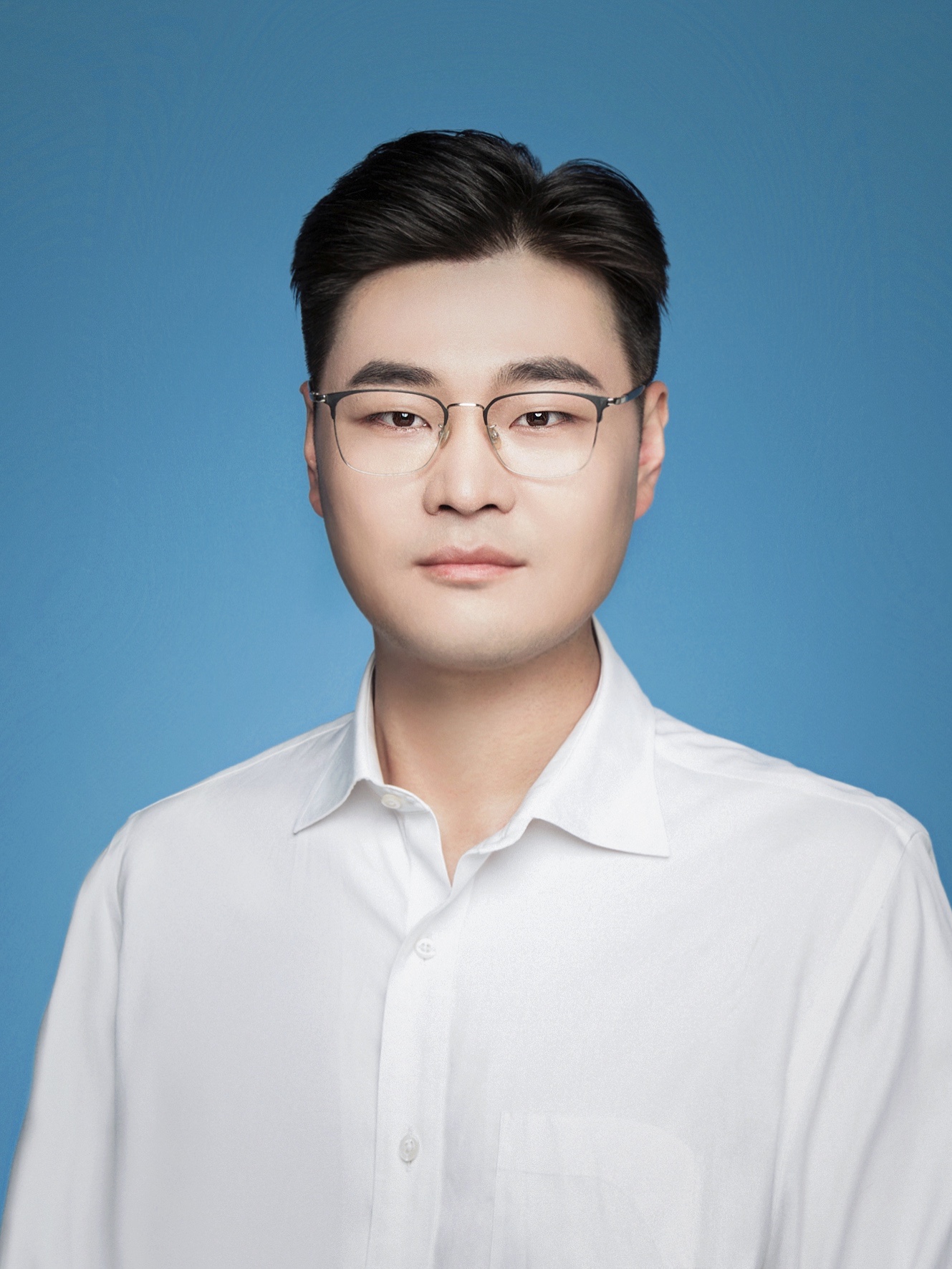}}]{Jie Wang}
  received the B.Sc. degree in electronic information science and technology from University of Science and Technology of China, Hefei, China, in 2005, and the Ph.D. degree in computational science from the Florida \mbox{State} University, Tallahassee, FL, in 2011. He is currently a professor in the Department of Electronic Engineering and Information Science at University of Science and Technology of China, Hefei, China. His research interests include AI for science, knowledge graph, large-scale optimization, deep learning, etc.  He is a senior member of IEEE.
\end{IEEEbiography}

\begin{IEEEbiography}[{\includegraphics[width=1in,height=1.25in,clip,keepaspectratio]{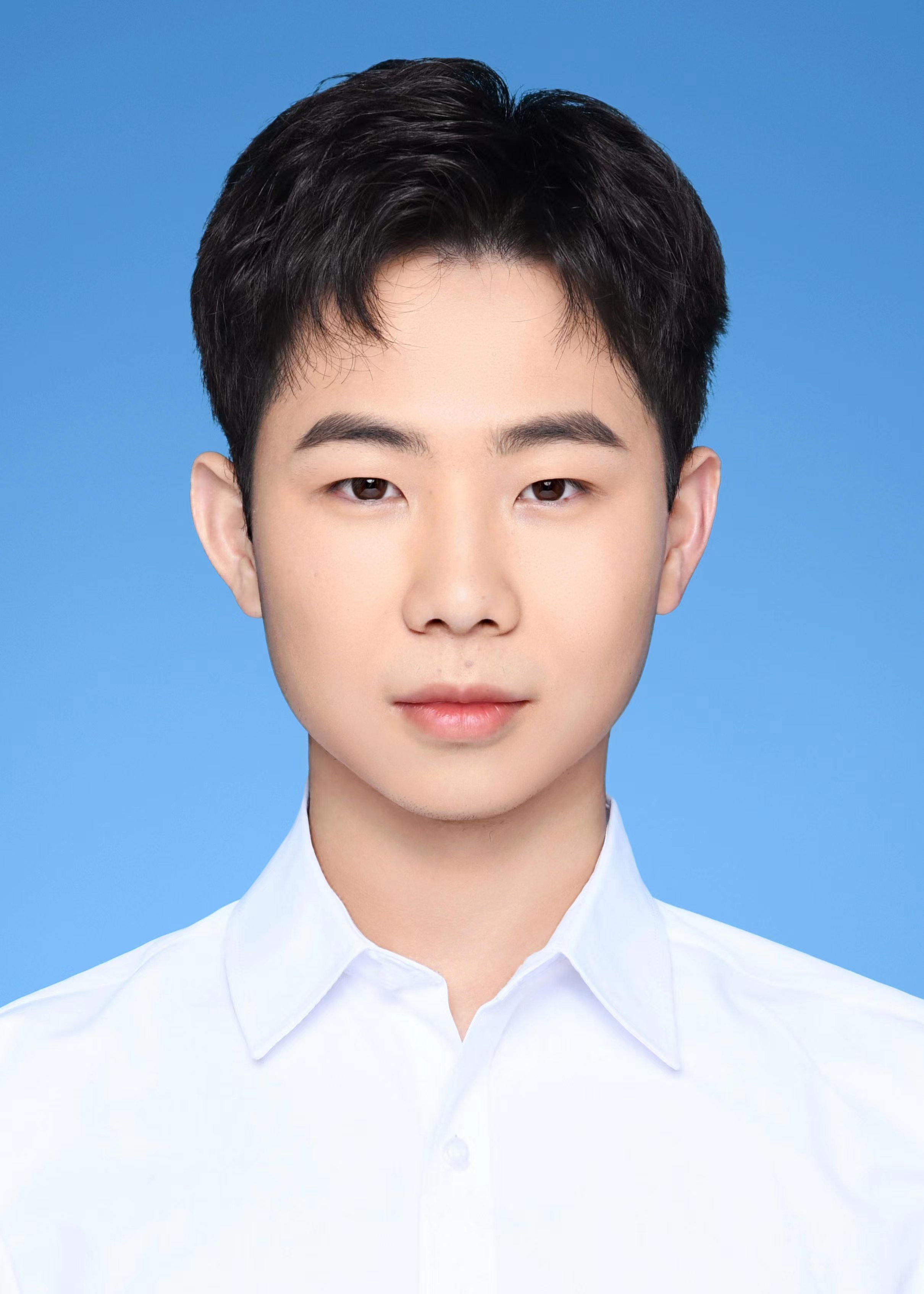}}]{Hanzhu Chen}
  received the B.Sc. degree in Computer Science and Technology from Southwest University, Chongqing, China, in 2021. He is currently a graduate student in the School of Data Science at University of Science and Technology of China, Hefei, China. His research interests include graph representation learning and natural language processing.
\end{IEEEbiography}

\begin{IEEEbiography}[{\includegraphics[width=1in,height=1.25in,clip,keepaspectratio]{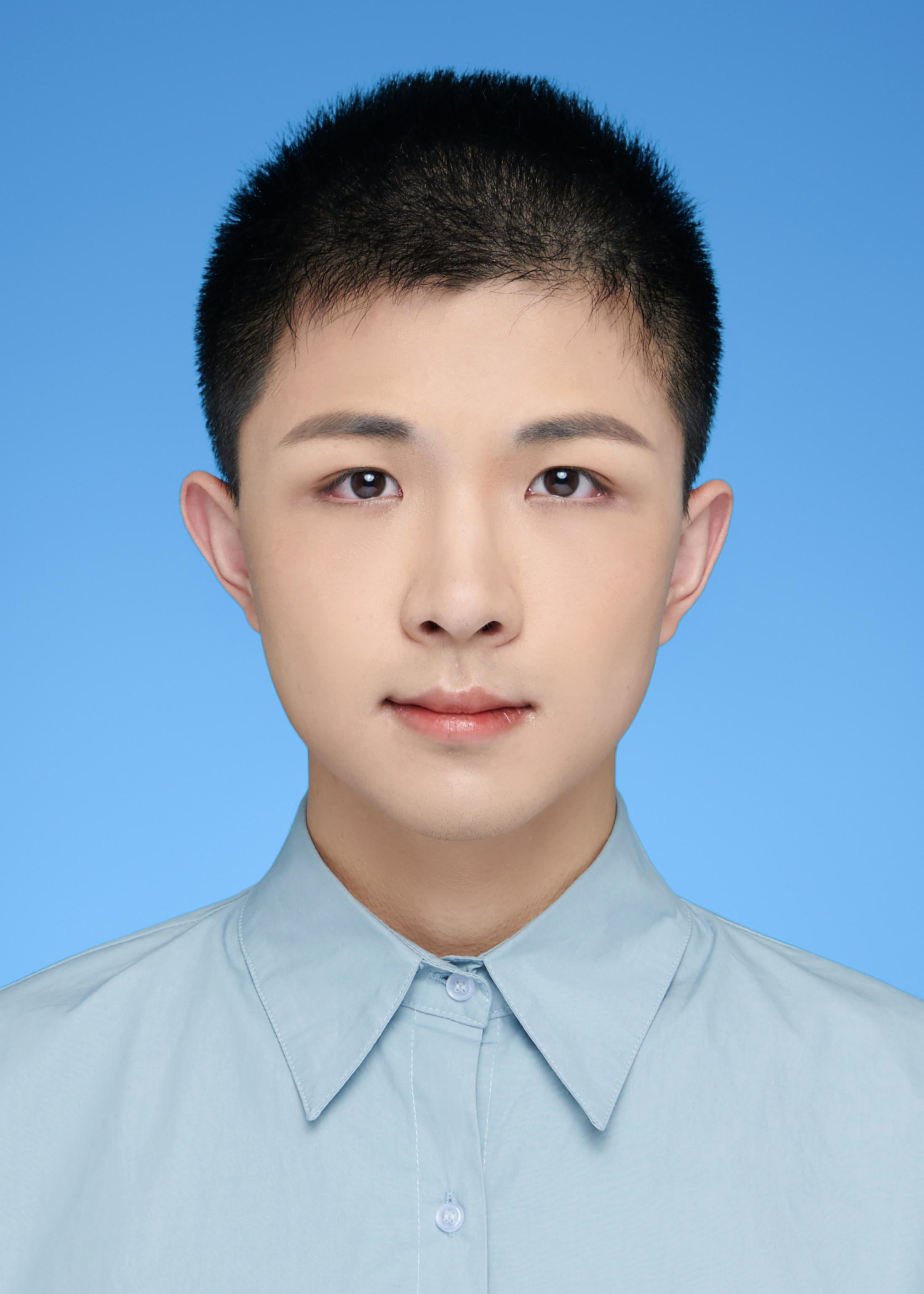}}]{Qitan Lv}
  received the B.Sc. degree in electronic and information engineering from South China University of Technology, GuangZhou, China, in 2023. He is currently a graduate student in the Department of Electronic Engineering and Information Science at University of Science and Technology of China, Hefei, China. His research interests include graph representation learning and natural language processing.
\end{IEEEbiography}

\begin{IEEEbiography}[{\includegraphics[width=1in,height=1.25in,clip,keepaspectratio]{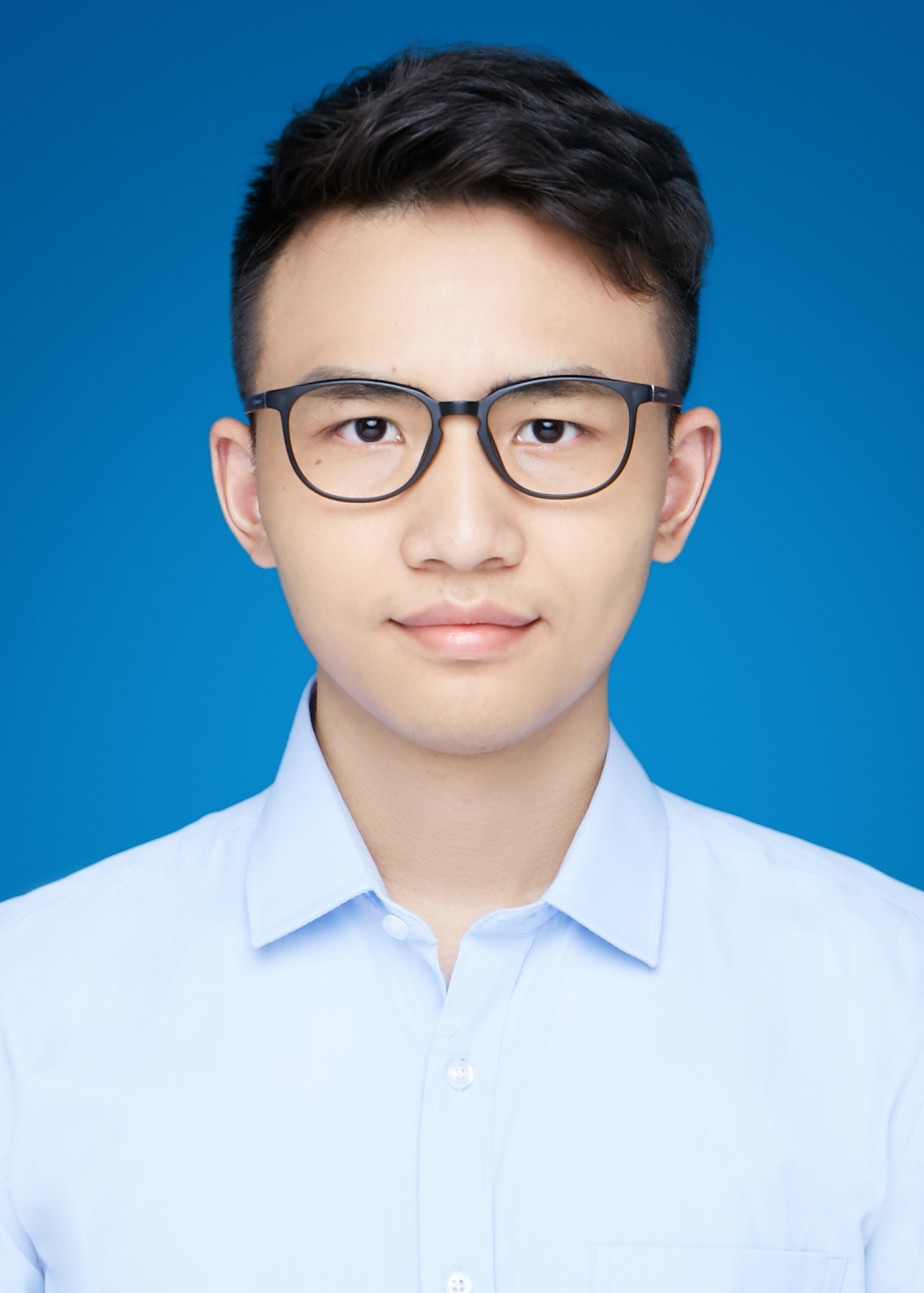}}]{Zhihao Shi}
  received the B.Sc. degree in Department of Electronic Engineering and Information Science from University of Science and Technology of China, Hefei, China, in 2020. a Ph.D. candidate in the Department of Electronic Engineering and Information Science at University of Science and Technology of China, Hefei, China. His research interests include graph representation learning and natural language processing.
\end{IEEEbiography}

\begin{IEEEbiography}[{\includegraphics[width=1in,height=1.25in,clip,keepaspectratio]{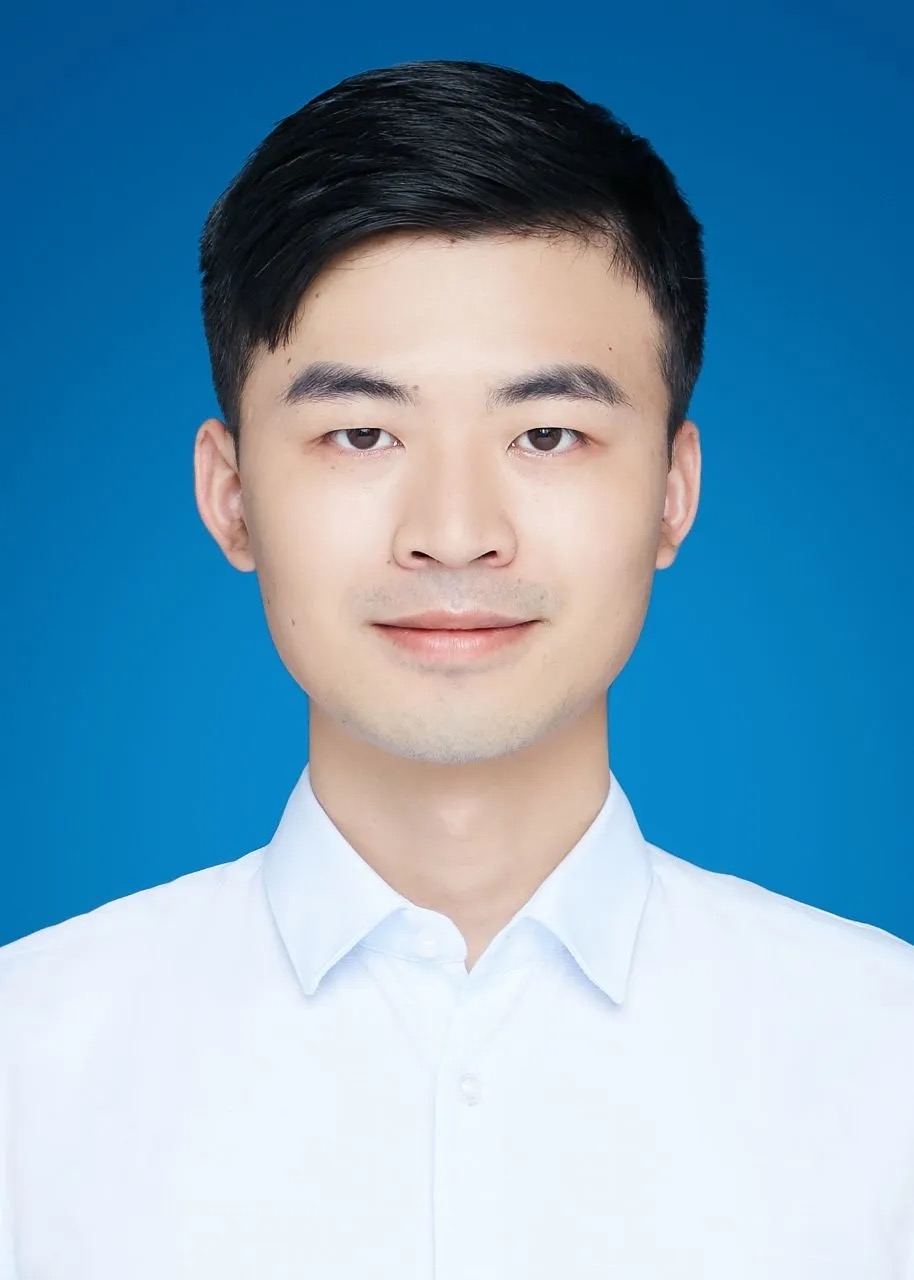}}]{Jiajun Chen}
 received the M.S. degree in Department of Electronic Engineering and Information Science from University of Science and Technology of China, Hefei, China, in 2022. His research interests include graph representation learning and natural language processing.
\end{IEEEbiography}

\begin{IEEEbiography}[{\includegraphics[width=1in,height=1.25in,clip,keepaspectratio]{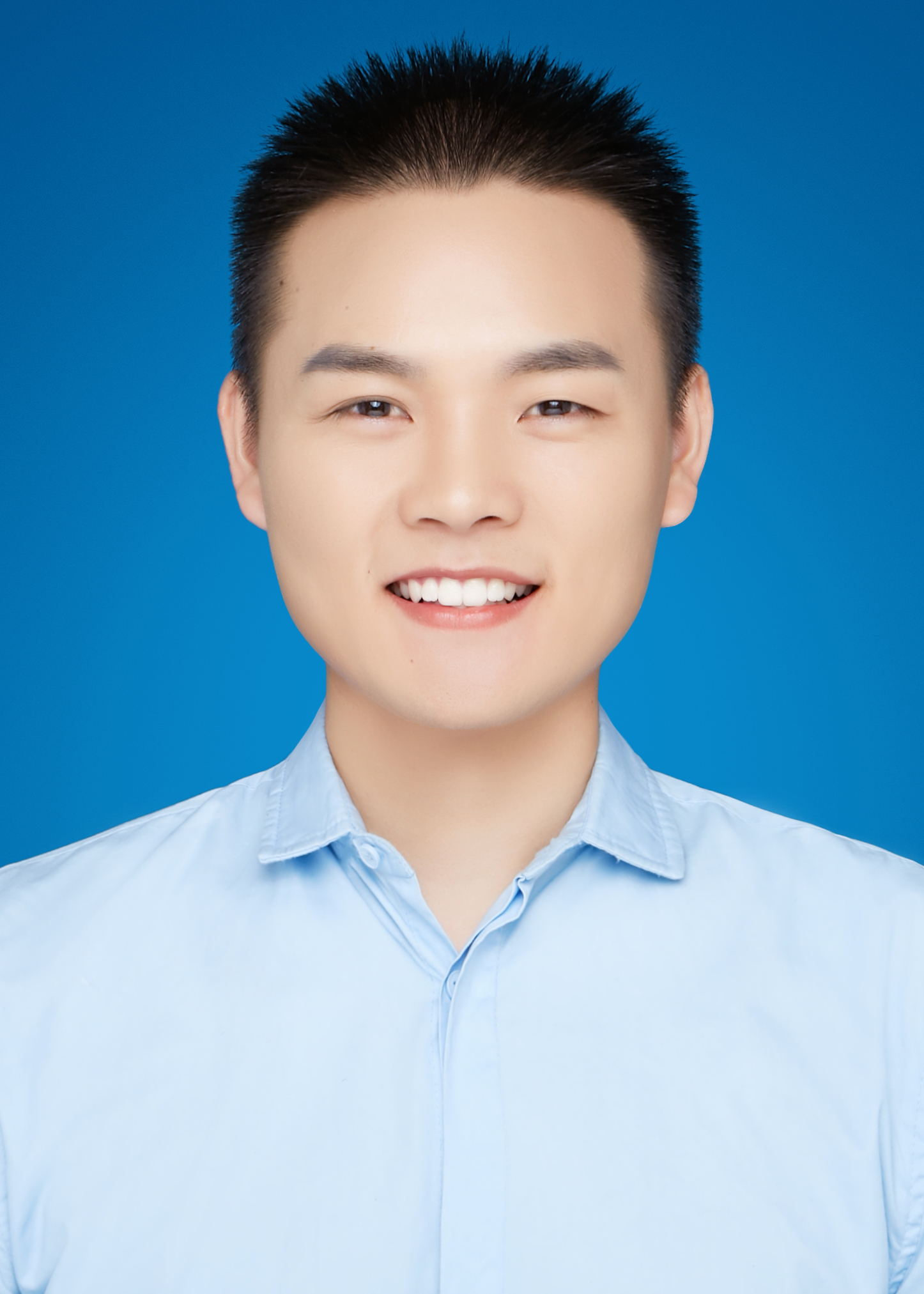}}]{Huarui He}
  received the B.Sc. degree in electronic engineering from Xidian University, Xi'an, China, in 2020. He is currently a graduate student in the Department of Electronic Engineering and Information Science at University of Science and Technology of China, Hefei, China. His research interests include graph representation learning and natural language processing.
\end{IEEEbiography}

% \begin{IEEEbiography}[{\includegraphics[width=1in,height=1.25in,clip,keepaspectratio]{imgs/HEHUARUI.jpeg}}]{Huarui He}
%   received the B.Sc. degree in electronic engineering from Xidian University, Xi'an, China, in 2020. He is currently a graduate student in the Department of Electronic Engineering and Information Science at University of Science and Technology of China, Hefei, China. His research interests include graph representation learning and natural language processing.
% \end{IEEEbiography}

\begin{IEEEbiography}[{\includegraphics[width=1in,height=1.25in,clip,keepaspectratio]{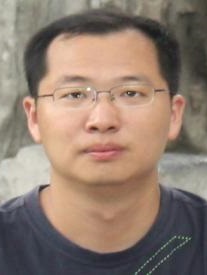}}]{Hongtao Xie}
  received the PhD degree in computer application technology from the Institute of
Computing Technology, Chinese Academy of Sciences, Beijing, China, in 2012. He is currently a
professor with the School of Information Science
and Technology, University of Science and Technology of China, Hefei, China. His research interests include multimedia content analysis and
retrieval, deep learning, and computer vision.
\end{IEEEbiography}

\begin{IEEEbiography}[{\includegraphics[width=1in,height=1.25in,clip,keepaspectratio]{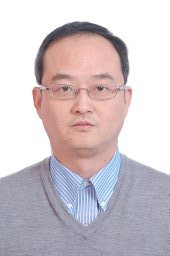}}]{Feng Wu}
received the B.S. degree in electrical engineering from Xidian University in 1992, and the M.S. and Ph.D. degrees in computer science from the Harbin Institute of Technology in 1996 and 1999, respectively. He is currently a Professor with the University of Science and Technology of China, where he is also the Dean of the School of Information Science and Technology. Before that, he was a Principal Researcher and the Research Manager with Microsoft Research Asia. His research interests include image and video compression, media communication, and media analysis and synthesis. He has authored or coauthored over 200 high quality articles (including several dozens of IEEE Transaction papers) and top conference papers on MOBICOM, SIGIR, CVPR, and ACM MM. He has 77 granted U.S. patents. His 15 techniques have been adopted into international video coding standards. As a coauthor, he received the Best Paper Award at 2009 IEEE Transactions on Circuits and Systems for Video Technology, PCM 2008, and SPIE VCIP 2007. He also received the Best Associate Editor Award from IEEE Circuits and Systems Society in 2012. He also serves as the TPC Chair for MMSP 2011, VCIP 2010, and PCM 2009, and the Special Sessions Chair for ICME 2010 and ISCAS 2013. He serves as an Associate Editor for IEEE Transactions on Circuits and Systems for Video Technology, IEEE Transactions ON Multimedia, and several other international journals.
\end{IEEEbiography}

\clearpage
\appendices 
\section{The Number of Topological Patterns} \label{appenA}
	\begin{theorem}
		In the knowledge graph, the number of topological patterns between any two irreflexive relations is at most seven.
	\end{theorem}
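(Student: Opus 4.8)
The plan is to model each occurrence of a relation in the knowledge graph as a \emph{directed edge} carrying a head endpoint and a tail endpoint, and then to show that the topological pattern between an ordered pair of such edges is completely determined by \emph{which} of their endpoints coincide. Since the relations are irreflexive, every edge $(h,r,t)$ satisfies $h \neq t$, so each edge has two \emph{distinct} endpoints; this is the hypothesis I would lean on throughout. Reducing the problem to a finite combinatorial enumeration of endpoint coincidences is the whole strategy.

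Concretely, I would fix two edges $e_1 = (h_1, r_1, t_1)$ and $e_2 = (h_2, r_2, t_2)$ and split on the number $k = |\{h_1,t_1\} \cap \{h_2,t_2\}|$ of shared entities. Because irreflexivity forces $|\{h_1,t_1\}| = |\{h_2,t_2\}| = 2$, we have $k \in \{0,1,2\}$, and I would treat each value in turn. For $k=0$ the edges share no entity, giving the single pattern \textbf{NC}. For $k=1$, the lone shared entity pairs one endpoint of $e_1$ with one endpoint of $e_2$, yielding the four possibilities $h_1 = h_2$, $t_1 = t_2$, $t_1 = h_2$, and $h_1 = t_2$, which are exactly \textbf{H-H}, \textbf{T-T}, \textbf{H-T}, and \textbf{T-H} under the ordered-pair convention of Section \ref{rcg}. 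For $k=2$ the endpoint sets coincide, $\{h_1,t_1\} = \{h_2,t_2\}$, and there are precisely two bijections between them: the identity matching $h_1 = h_2,\, t_1 = t_2$ gives \textbf{PARA}, while the transposition $h_1 = t_2,\, t_1 = h_2$ gives \textbf{LOOP}. Summing the cases yields $1 + 4 + 2 = 7$.

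To finish I would check that the cases are exhaustive and pairwise exclusive. Exhaustiveness is immediate since $k$ ranges over $\{0,1,2\}$ and within each $k$ the listed coincidences cover every way to realize that intersection size. Exclusivity in the $k=2$ case is where irreflexivity does the decisive work: the parallel and loop matchings cannot hold simultaneously, for $h_1 = h_2$ together with $h_1 = t_2$ would force $h_2 = t_2$, contradicting $h_2 \neq t_2$; an analogous two-line check rules out overlaps among the four $k=1$ subcases.

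The main obstacle I anticipate is bookkeeping around \emph{direction}. The labels H-T and T-H both describe a single shared endpoint of "mixed'' type, and one must treat them as distinct patterns determined by the \emph{ordered} pair $(r_1,r_2)$ rather than collapsing them. Aligning the orientation convention exactly with Figure \ref{fig:2} --- so that, say, "$t_1 = h_2$'' is recorded as head-to-tail rather than tail-to-head --- is the only genuinely delicate point; once that convention is pinned down, the remainder is a routine finite enumeration, and the bound "at most seven'' follows.
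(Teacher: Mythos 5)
Your proposal is correct and follows essentially the same route as the paper's own proof: both case-split on the number of shared entities $k \in \{0,1,2\}$ (guaranteed by irreflexivity to be the only options), enumerate the $1+4+2=7$ endpoint-coincidence configurations, and conclude. Your additional checks of pairwise exclusivity and the ordered-pair orientation convention are refinements of, not departures from, the paper's argument.
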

	\begin{proof}
		For any two edges $(h_1, t_1)$ and $(h_2, t_2)$ in the knowledge graph, the number of intersections between them can be $0,1,2$. Suppose the two edges are corresponding to two triples $(h_1, r_1, t_1)$ and $(h_2, r_2, t_2)$ in the knowledge graph. As the relations $r_1$ and $r_2$ both are irreflexive, we have $h_1\ne t_1$ and $h_2 \ne t_2$.
		\begin{enumerate}
			\item[(a)] If the number of intersections is $0$, we know the two edges are not connected, which implies that their topological structure only has one pattern. 
			\item[(b)] If the number of intersections is $1$, there are four cases: 
			\begin{enumerate}
				\item[1)] $h_1=h_2, t_1\ne t_2$;
				\item[2)] $h_1=t_2, h_2\ne t_1$;
				\item[3)] $t_1=h_2, h_1\ne t_2$;
				\item[4)] $t_1=t_2, h_1\ne h_2$.
			\end{enumerate}
			Each case is corresponding to a specific topological pattern. The topological structures can be head-to-head, head-to-tail, tail-to-head, and tail-to-tail, thus the number of topological patterns is four.
			\item[(c)] If the number of intersections is $2$, there are two cases:
			\begin{enumerate}
				\item[1)] $h_1=h_2, t_1=t_2$;
				\item[2)] $h_1=t_2, h_2=t_1$.
			\end{enumerate}
			Each case is corresponding to a specific topological pattern. The number of topological patterns is two.
		\end{enumerate}
		Therefore, the number of topological patterns between any two irreflexive relations is at most $1+4+2=7$.
	\end{proof}

 \section*{Proof of Common Neighbors in a Reasoning Path}

\subsection*{Part 1: Single-hop Neighbors}

Let $G = (V, E)$ be a graph with vertex set $V$ and edge set $E$. For any vertex $x$, its \textit{single-hop neighborhood}, denoted $N(x)$, is defined as the set of vertices directly connected to $x$, i.e., $N(x) = \{y \in V | (x, y) \in E \text{ or } (y, x) \in E\}$. A \textit{reasoning path} is a sequence of vertices $(x_1, x_2, \ldots, x_n)$ where $x_1 = u$, $x_n = v$, and $(x_i, x_{i+1}) \in E$ for all $1 \leq i < n$, denoted $P_{uv}$.

\begin{theorem} If two vertices $u, v \in V$ have a common single-hop neighbor $c$, then there exists a reasoning path $P_{uv}$ from $u$ to $v$ that includes $c$.
\end{theorem}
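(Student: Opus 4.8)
The plan is to exhibit an explicit path, namely the length-two sequence $P_{uv} = (u, c, v)$, and verify that it meets both requirements of a reasoning path: that its endpoints are $u$ and $v$, and that each consecutive pair is joined by an edge. First I would unpack the hypothesis that $c$ is a common single-hop neighbor. By definition of the neighborhood, $c \in N(u)$ supplies an edge between $u$ and $c$, and $c \in N(v)$ supplies an edge between $c$ and $v$. These two adjacencies, together with setting $x_1 = u$ and $x_3 = v$, are exactly what the definition of $P_{uv}$ demands, provided each edge can be oriented in the forward direction required by the path.

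The subtle point---and the step I expect to be the main obstacle---is that $N(x)$ is defined symmetrically (it collects $y$ with $(x,y) \in E$ \emph{or} $(y,x) \in E$), whereas the reasoning path requires the \emph{ordered} pairs $(u,c)$ and $(c,v)$ to lie in $E$. If $c \in N(u)$ held only because $(c,u) \in E$, the forward edge $(u,c)$ need not be present, and the naive path $(u,c,v)$ would fail as written. To close this gap I would invoke the standard convention in this setting (following GraIL \cite{grail}) that the knowledge graph $\mathcal{G}$ is augmented with inverse relations, so that an undirected adjacency between two vertices always furnishes a directed edge in the needed direction; equivalently, reasoning paths are read over the undirected skeleton of $\mathcal{G}$. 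Under this convention the case analysis over the four orientation combinations of the two edges collapses, since each combination yields a usable forward edge.

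With the orientation issue resolved, the verification is immediate: $P_{uv} = (u,c,v)$ begins at $u$, ends at $v$, and satisfies $(u,c) \in E$ and $(c,v) \in E$, so it is a reasoning path from $u$ to $v$ that contains $c$. I would close by noting that this length-two construction is precisely the base case for the multi-hop generalization, in which a common $k$-hop neighbor is placed on a reasoning path of length at most $2k$ by concatenating a $u$-to-$c$ path with a $c$-to-$v$ path---the operation that underlies the completeness guarantee of the CCN subgraph.
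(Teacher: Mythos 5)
Your proof is correct and takes essentially the same approach as the paper: exhibit the explicit length-two sequence $(u,c,v)$ and verify that it starts at $u$, ends at $v$, and that both consecutive pairs are edges. The orientation subtlety you raise (the symmetric definition of $N(x)$ versus the directed requirement $(x_i,x_{i+1})\in E$) is a point the paper's own proof silently elides by asserting $(u,c)\in E$ and $(c,v)\in E$ directly ``by definition,'' so your explicit appeal to the inverse-relation convention only makes the argument more careful, not different.
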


	\begin{proof}
Given $u, v \in V$ and a common single-hop neighbor $c \in N(u) \cap N(v)$, by definition, $(u, c) \in E$ and $(c, v) \in E$. The sequence of vertices $(u, c, v)$ forms a valid reasoning path $P_{uv}$, since it starts with $u$, ends with $v$, and each consecutive pair of vertices is connected by an edge in $E$. 
\end{proof}

\subsection*{Part 2: Multi-hop Neighbors}

For any vertex $x \in V$ and a positive integer $k$, the \textit{k-hop neighborhood} of $x$, denoted by $N_k(x)$, is defined as the set of vertices that can be reached from $x$ by traversing exactly $k$ edges. Formally, $N_k(x)$ includes vertex $y$ if there exists a sequence of vertices $(x = x_0, x_1, \ldots, x_k = y)$ such that $(x_{i-1}, x_i) \in E$ for all $1 \leq i \leq k$.

\begin{theorem} If two vertices $u, v \in V$ have a common multi-hop neighbor $c$, then there exists a reasoning path $P_{uv}$ from $u$ to $v$ that includes $c$.
\end{theorem}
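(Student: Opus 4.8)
The plan is to mirror the single-hop argument of Part 1, replacing the two single edges by two short walks obtained by unfolding the definition of the $k$-hop neighborhood. First I would invoke the hypothesis: since $c$ is a common multi-hop neighbor of $u$ and $v$, there exist positive integers $k_1$ and $k_2$ with $c \in N_{k_1}(u)$ and $c \in N_{k_2}(v)$. Unpacking the definition of $N_k(\cdot)$ then yields two concrete walks, namely a sequence $(u = a_0, a_1, \ldots, a_{k_1} = c)$ with $(a_{i-1}, a_i) \in E$ for all $1 \leq i \leq k_1$, and a sequence $(v = b_0, b_1, \ldots, b_{k_2} = c)$ with $(b_{j-1}, b_j) \in E$ for all $1 \leq j \leq k_2$.

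Next I would reverse the second walk so that it is read from $c$ to $v$, giving $(c = b_{k_2}, b_{k_2-1}, \ldots, b_0 = v)$, and splice it onto the end of the first walk at their shared endpoint $c$. The resulting sequence $(a_0, a_1, \ldots, a_{k_1}, b_{k_2-1}, \ldots, b_0)$ starts at $u$, terminates at $v$, passes through $c$ at position $k_1$, and has each consecutive pair joined by an edge of $E$. By the definition of a reasoning path, this sequence is precisely a $P_{uv}$ that includes $c$, which is exactly the claim.

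The one step requiring care---and the main obstacle---is the reversal: the reasoning-path definition demands $(x_i, x_{i+1}) \in E$, yet reversing the walk from $v$ to $c$ presents its pairs in the opposite orientation. I would dispose of this exactly as Part 1 implicitly does, by appealing to the undirected reading of adjacency already built into the neighborhood definition $N(x) = \{y \in V \mid (x,y) \in E \text{ or } (y,x) \in E\}$; in the knowledge-graph setting this amounts to admitting inverse relations as legitimate steps along a reasoning path, so each reversed pair remains a valid edge traversal. With that convention the concatenation goes through, and no disjointness of the two walks is needed, since the statement asserts only the \emph{existence} of a path through $c$ rather than a simple (vertex-distinct) one.
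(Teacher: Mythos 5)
Your proof is correct and takes essentially the same route as the paper's: unfold the $k$-hop definition into two walks meeting at $c$ and concatenate them at that shared vertex. If anything you are more careful than the paper, which silently asserts a path ``from $c$ to $v$'' even though the definition of $N_k(v)$ only yields a walk from $v$ to $c$ --- your explicit reversal step, justified by the undirected/inverse-relation reading of adjacency, together with allowing distinct hop counts $k_1 \neq k_2$, fills exactly that elision.
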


	\begin{proof}
Let $u, v \in V$ be two vertices with a common multi-hop neighbor $c \in V$. This implies that for some positive integer $k$, $c \in N_k(u) \cap N_k(v)$. By the definition of $N_k$, there exist paths $P_{uc}$ from $u$ to $c$ and $P_{cv}$ from $c$ to $v$, each consisting of $k$ edges. 

To construct a reasoning path from $u$ to $v$ that includes the common multi-hop neighbor $c$, we concatenate the paths $P_{uc}$ and $P_{cv}$ where $c$ is the common vertex between the two paths. Formally, we can write:

\begin{equation}
P_{uv} = P_{uc} \oplus P_{cv}
\end{equation}

where $\oplus$ denotes the concatenation of the paths such that the common vertex $c$ appears only once in the concatenated path $P_{uv}$. Explicitly, if

\begin{align*}
P_{uc} &= (u, \ldots, c) \\
P_{cv} &= (c, \ldots, v)
\end{align*}

then the concatenated path $P_{uv}$ is given by:

\begin{align*}
P_{uv} &= (u, \ldots, c, \ldots, v)
\end{align*}

This results in a reasoning path that starts at $u$, passes through $c$, and ends at $v$, demonstrating that there exists such a path $P_{uv}$ that includes $c$. 

\end{proof}

\section{Implementation Details} \label{imp_details}
\subsection{Training Implementation}
We conduct all the experiments on an NVidia GeForce GTX 3090 GPU and an Intel(R) Xeon(R) Gold 6246R CPU @ 3.40GHz.  We use PyTorch\cite{paszke2019pytorch} and DGL\cite{dgl} to implement our TACO. For optimization, we use Adam optimizer \cite{adam} with batch size 16 and an initial learning rate of $1\times 10^{-2}$.
We divide the learning rate by 5 when the validation loss does not improve for 5 epochs, and stop training if it does not improve for 8. We compare the marginal ranking loss and binary cross entropy loss when training our TACO and apply the marginal ranking loss to construct TACO. The maximum training epoch of TACO is set to 20. When training our TACO-base model, we use an initial learning rate of $ 5 \times 10^{-2}$ with the same learning rate scheduler as TACO but just train maximum 15 epoches to get the optimum results. We will release our code once the paper is accepted to be published.

\subsection{Model Framework} 
First, we apply a graph extractor for each target link to get CCN subgraphs and corresponding RCGs. Then, we apply a two-layer R-GCN to reason on extracted CCN subgraphs and a two-layer GCN to reason on their corresponding relation correlation graphs. Finally, we apply the combination of CCN subgraphs' embedding vectors and relation correlation graphs' embedding vectors into a single layer perceptron to produce a plausibility score of the target prediction relation link and other negative candidate sample links for both classification and rank tasks.

\subsection{Hyperparameters}
Throughout all the experiments, we set the training and validation batch size for branching models to be 16.  We conduct grid search to obtain optimal hyperparameters, where we search dropout rate in \{0, 0.1, 0.2\}, edge-dropout rate in \{0, 0.3, 0.5\} and margins in the loss function in \{8, 10, 12, 16\}. The regularization coefficient of GNN weights is set to 0.01. Configuration for the best performance of each dataset is given within the code.
% you can choose not to have a title for an appendix
% if you want by leaving the argument blank

% use section* for acknowledgment
%\ifCLASSOPTIONcompsoc
  % The Computer Society usually uses the plural form
 % \section*{Acknowledgments}
%\else
  % regular IEEE prefers the singular form
 % \section*{Acknowledgment}
%\fi

% Can use something like this to put references on a page
% by themselves when using endfloat and the captionsoff option.
\ifCLASSOPTIONcaptionsoff
  \newpage
\fi

\end{document}